\newtheorem{theorem}{Theorem}
\icmltitlerunning{Submission to ICML 2024}
\begin{document}

\twocolumn[
\icmltitle{Enhancing Peak Assignment in \textsuperscript{13}C NMR Spectroscopy: A Novel Approach Using Multimodal Alignment
}



\icmlsetsymbol{equal}{*}

\begin{icmlauthorlist}
\icmlauthor{Hao Xu}{equal,yyy}
\icmlauthor{Zhengyang Zhou}{equal,yyy}
\icmlauthor{Pengyu Hong}{yyy}

\end{icmlauthorlist}

\icmlaffiliation{yyy}{Department of Computer Science, Brandeis University, Waltham, USA}

\icmlcorrespondingauthor{Pengyu Hong}{hongpeng@brandeis.edu}

\icmlkeywords{Machine Learning, ICML}
\vskip 0.3in
]


\printAffiliationsAndNotice{\icmlEqualContribution} 



%

\newcommand{\fix}{\marginpar{FIX}}
\newcommand{\new}{\marginpar{NEW}}
\newcommand{\specialcell}[2][t]{%
  \begin{tabular}[#1]{@{}c@{}}#2\end{tabular}}




\begin{abstract}
Nuclear magnetic resonance (NMR) spectroscopy plays an essential role in deciphering molecular structure and dynamic behaviors. While AI-enhanced NMR prediction models hold promise, challenges still persist in tasks such as molecular retrieval, isomer recognition, and peak assignment. In response, this paper introduces a novel solution, Knowledge-Guided Multi-Level Multimodal Alignment with Instance-Wise Discrimination (K-M\textsuperscript{3}AID), which establishes correspondences between two heterogeneous modalities: molecular graphs and NMR spectra. K-M\textsuperscript{3}AID employs a dual-coordinated contrastive learning architecture with three key modules: a graph-level alignment module, a node-level alignment module, and a communication channel. Notably, K-M\textsuperscript{3}AID introduces knowledge-guided instance-wise discrimination into contrastive learning within the node-level alignment module. In addition, K-M\textsuperscript{3}AID demonstrates that skills acquired during node-level alignment have a positive impact on graph-level alignment, acknowledging meta-learning as an inherent property. Empirical validation underscores the effectiveness of K-M\textsuperscript{3}AID in multiple zero-shot tasks.
\end{abstract}

\section{Introduction}
\label{sec1}



Nuclear magnetic resonance (NMR) spectroscopy has found broad applications in various scientific domains, such as chemistry, environmental science, food science, material science, and pharmaceuticals, by providing insights into molecular dynamics and structures \cite{gunther1994nmr, claridge2016high, yu2021recent}. The details of NMR spectra can be influenced by through-bond and through-space interactions, serving as "fingerprints" to deduce atomic connectivity, relative stereochemistry, and conformations. The conventional approach for elucidating molecular structures and attributing peaks has long relied on manual determination by organic chemists \cite{guan2021real}. However, the interpretation of NMR spectra is not straightforward, particularly when dealing with isomers and complex molecules consisting of multiple stereogenic (chiral) centers \cite{wu2023elucidating, chhetri2018recent}. Even an expert chemist may encounter significant difficulties in accurately assigning isomeric compounds with extremely similar NMR spectra due to this complexity \cite{nicolaou2005chasing}.

\begin{figure*}[ht!]
    \includegraphics[width=1\textwidth]{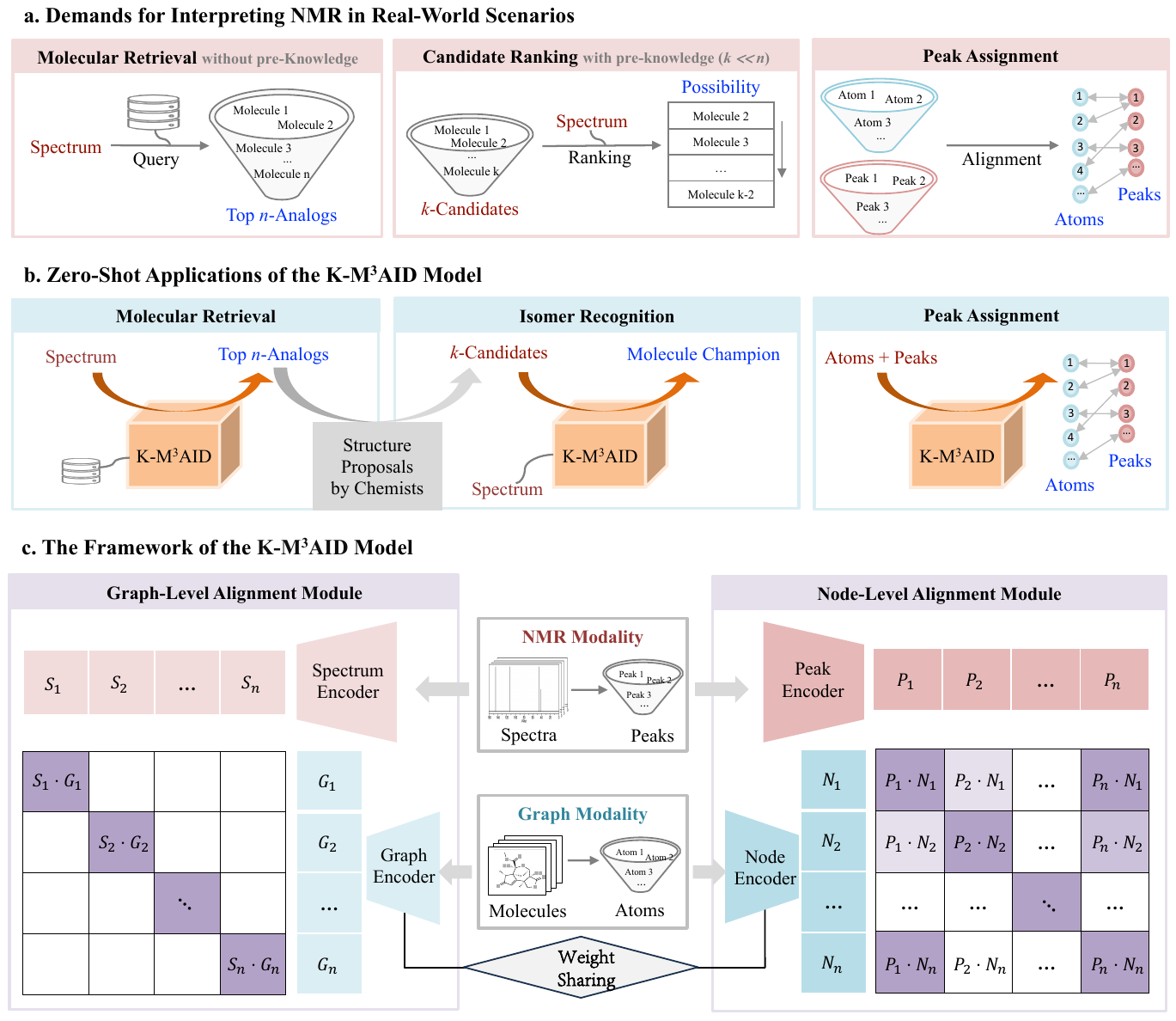}
    \caption{a. Demands for interpreting NMR spectra in real-world scenarios: molecular retrieval, candidate ranking, and peak assignment; b. Zero-shot applications of the K-M\textsuperscript{3}AID model: molecular retrieval, isomer recognition, and peak assignment; c. The framework of K-M\textsuperscript{3}AID model: the molecular alignment module is responsible for optimizing the the correspondence between modalities at the molecular level, the atomic alignment module focus on the fine-tuning of atomic positioning on the spectrum, and the communication channel dynamically adjusts the flow of gradients between node encoder and graph encoder during the training process. $S$ for spectrum embedding, $G$ for graph embedding, $P$ for peak embedding and $N$ for node embedding.}
    \label{fig:CLMA_Architecture}
\end{figure*}

While recent AI-enhanced NMR spectrum prediction models show promise in generating spectra from candidate structures \cite{chen2020review, jonas2022prediction, kuhn2022applications}, these models still face challenges in peak assignment due to their high error tolerance and a lack of precise point-to-point guidance. Since peak assignment is a determining step in isomer recognition, these models fall short in achieving accurate isomer recognition. Another contributing factor is the absence of quantitative ranking for candidate isomers in their implementation. In addition, the success of these models requires a good level of prior knowledge of molecular structures to construct candidates. However, real-world practice often demands spectral interpretation before detailed structural information is available. For instance, when identifying an unknown compound from a plant, there is limited or no knowledge of this compound. Thus, the interpretation of spectra should transition from spectral data to structural elucidation. Therefore, it is imperative to utilize advanced AI methodologies to simplify NMR spectral interpretation, particularly in tasks such as molecular retrieval, candidate ranking, and peak assignment (see Figure ~\ref{fig:CLMA_Architecture}.a).

In the realm of data representations for NMR interpretation, two heterogeneous modalities come into play: NMR spectrum and molecular graph. A NMR spectrum is a sequence-based chemical modality that captures molecular structural and electronic details through an NMR spectrometer, translating such information into NMR peaks. A molecular graph encapsulates molecular structural and electronic information through the arrangement of nodes and edges, along with their respective attributes. The analysis of molecular structure and peak assignment requires clear correspondence across these two heterogeneous modalities, which can be formulated as a multimodal alignment problem.

Molecules are distinguished by the distinctive configuration of atoms coupled with bonding patterns, giving rise to distinct spectra. As molecular diversity is extensive, it is impractical to include all molecules and their spectra in a training dataset. However, the corresponding atomic diversity is comparatively constrained. In the context of the multi-view nature of molecules, it is a sound approach to analyze molecular structures by interpreting spectra at the atomic level. Thus, this task can be formulated as a meta-learning problem, which is a branch of metacognition concerned with understanding one's own learning and learning processes.

In light of these challenges and opportunities, we propose a novel framework, K-M\textsuperscript{3}AID (Knowledge-Guided Multi-Level Multimodal Alignment with Instance-Wise Discrimination), aiming to achieve reliable analog retrieval, candidate ranking, and peak assignment in the interpretation of NMR spectra (see Figure ~\ref{fig:CLMA_Architecture}.b). The overview of our K-M\textsuperscript{3}AID framework features a dual-coordinated contrastive learning architecture, comprising three key components: a graph-level alignment module, a node-level alignment module, and a communication channel.
The graph-level alignment module establishes correspondences between molecules and their individual $^{13}$C NMR spectra. Given that each unique molecule produces a distinct spectral signature, this module employs a straightforward cross-entropy loss for effective contrastive learning.
The node-level alignment module aligns each Carbon atom within the molecules with their signal peaks on the spectrum. Unlike the diverse and distinctive molecular spectral signatures, many atoms exhibit chemical symmetry and magnetic equivalence within the same molecule, corresponding to the same peaks. However, atoms with different local surroundings can still present significant similarity on the spectrum, introducing a heightened level of complexity. To address these complex scenarios, we introduce knowledge-guided instance-wise discrimination based on contrastive learning in the node-level alignment module (see Figure \ref{fig:knowledge-span}).
The communication channel dynamically adjusts the flow of gradients between the node encoder and the graph encoder from two modules during the training process.

In summary, our contribution encompasses three significant aspects: \textbf{\textit{Conceptually:}} We integrate cross-modal alignment at two architectural levels, namely graph and node levels, within the K-M\textsuperscript{3}AID framework. This integration facilitates rapid adaptation, significantly boosting the efficiency of learning for zero-shot tasks. \textbf{\textit{Methodologically:}} We introduce knowledge-guided instance-wise discrimination for cross-modal contrastive learning, leveraging continuous and domain-specific features with inherent natural ordering. To the best of our knowledge, this is the first demonstration of knowledge-guided instance-wise discrimination-based cross-modal contrastive learning, transforming discrete comparisons into a continuous paradigm. \textbf{\textit{Empirically:}} We substantiate the effectiveness of K-M\textsuperscript{3}AID through its successful application to various zero-shot tasks, including molecular retrieval, isomer recognition, and peak assignment.

\section{Preliminaries}

\label{multimodal-alignment}
\textbf{Multimodal Alignment:}
Multimodal alignment, as defined in the literature \cite{baltrusaitis2017multimodal}, involves establishing relationships and correspondences among sub-components of instances from two or more modalities. A typical example is identifying specific regions in an image that correspond to words or phrases in a given caption \cite{karpathy2015deep}. This approach offers numerous benefits, including enhanced data interpretation, heightened accuracy and robustness, overcoming limitations of single-modal systems, and better addressing real-world complexity \cite{baltrusaitis2017multimodal}, \cite{summaira2021recent}, \cite{akkus2023multimodal}. CLIP (Contrastive Language-Image Pretraining) \cite{radford2021learning, li2021supervision} is one of the most widely adopted frameworks for multimodal alignment. As highlighted in the introduction, molecular information originates from diverse sources such as molecule graphs and NMR spectroscopy. To leverage effective alignments of this multifaceted information across different modalities, we adopt the CLIP framework with graph neural networks (GNN) \cite{xu2018powerful}, \cite{wu2022graph} to encode molecular information and neural network encoders \cite{serra2018towards} to encapsulate NMR information.


\label{meta-learning}
\textbf{Meta-Learning:} 
Meta-learning is defined as the process of learning how to learn across tasks \cite{vilalta2002meta}. More specifically, it leverages skills previously acquired from related tasks to the current one \cite{lake2017building}. With more skills learned, acquiring new ones becomes easier, requiring fewer examples and less trial-and-error \cite{vanschoren2018meta,finn2017metamodel}. A meta-learner is trained on a diverse set of object recognition tasks. During this training, it learns common features, patterns, and strategies for recognizing objects. Once trained, when presented with a new, previously unseen object category, the meta-learner can rapidly adapt and achieve high recognition accuracy, leveraging the knowledge acquired from the diverse training tasks to perform in this novel recognition task \cite{finn2017metamodel}. A profound understanding of atom properties allows us to extend our vision to previously unseen molecules, aligning with the principles of meta-learning in artificial intelligence.


\label{contrastive-learning}
\textbf{Contrastive Learning:} 
Contrastive learning focuses on discerning similarities and differences between items \cite{le2020contrastive, jaiswal2021contraself, liu2021selfcontrastive}. A fundamental aspect of this process involves instance-wise discrimination \cite{wu2018unsupervised}. Models incorporating instance-wise discrimination not only foster an understanding of the inherent data structure but also enhance generalization capabilities. This is attributed to the contrastive learning approach, which prioritizes distinguishing between instances rather than memorizing specific labeled examples. In chemistry, each molecule/atom is treated as a distinct instance, and the learning algorithm focuses on distinguishing each molecule/atom based on its context.


\section{Our Method}
\label{sec3}


In this section, we firstly introduce Knowledge-Guided Instance-Wise Discrimination. Then, we present the architecture of the K-M\textsuperscript{3}AID framework, an end-to-end system designed for multi-level multimodal alignment, along with its loss function.

\subsection{Knowledge-Guided Instance-Wise Discrimination Contrastive Learning} 

\begin{figure*}[ht]
    \centering
    \includegraphics[width=0.95\textwidth]{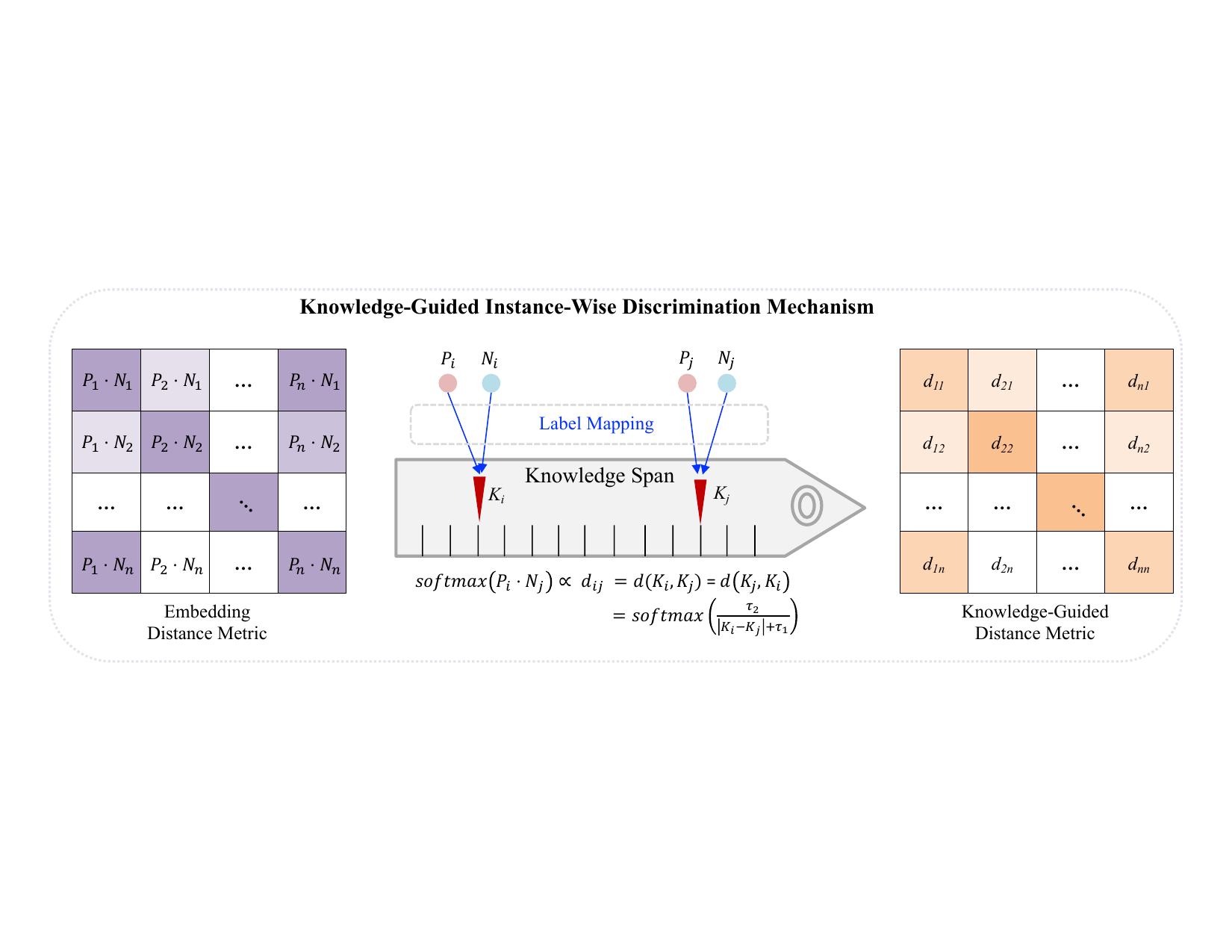}
    \caption{Knowledge-Guided Instance-Wise Discrimination Mechanism. $K_{i}$ and $ K_{j}$ represent the corresponding knowledge span labels for $i^{th} $ and $ j^{th}$ items.}
    \label{fig:knowledge-span}
\end{figure*}
 
Knowledge Span, which we define as a continuous and domain-specific feature, exhibits natural ordering and is able to offer guidance for contrastive learning. As such, we introduce a novel approach into contrastive learning, termed Knowledge-Guided Instance-Wise Discrimination (see Figure ~\ref{fig:knowledge-span}). This approach expands the scope of contrastive learning from confined comparisons (pre-determined negative and positive pairs) to unrestricted comparisons (no need for pre-determination). This extension removes the necessity of explicitly defining such pairs, thus mitigating the potential introduction of human bias.


Suppose $\mathcal{M}$ is the set of instances. $\mathcal{A} \subset \mathbb{R}^{d_1}$ is the set of tunable instances' embeddings in modality A, $\mathcal{B} \subset \mathbb{R}^{d_1}$ is the set of tunable instances' embeddings in modality B, and $\mathcal{K} \subset \mathbb{R}^{d_2}$ is the corresponding fixed knowledge span label that can guide the relative distance learning between components in $\mathcal{A}$ and $\mathcal{B}$. Thus, the size of $\mathcal{A}$, $\mathcal{B}$, $\mathcal{K}$ are $|\mathcal{M}|$, respectively.

Let $\mathcal{A}_i$ be the $i^{th}$ instance embedding of $\mathcal{A}$, and $\mathcal{B}_j$ be the $j^{th}$ instance embedding of $\mathcal{B}$. We define the distance function between $\mathcal{A}_i$ and $\mathcal{B}_j$ as $d_{E}(\mathcal{A}_i, \mathcal{B}_j) = \mathcal{A}_i \cdot \mathcal{B}_j \rightarrow \mathbb{R}^{+}$, and calibration function $d(\mathcal{K}_{i}, \mathcal{K}_{j}) \rightarrow \mathbb{R}^{+}$ with a monotonic property and constraint $\sum_{j=1}^{|\mathcal{M}|}d(\mathcal{K}_{i}, \mathcal{K}_{j}) = 1$, in which $\mathcal{K}_{i}$ and $\mathcal{K}_{j}$ serve as the designated Knowledge Span Label. We introduce the Knowledge Span Guided Loss (KSGL) as follows:
\begin{align}
     KSGL(i) &=-\displaystyle\sum_{\substack{1\leq j \leq |\mathcal{M}|}} d(\mathcal{K}_{i}, \mathcal{K}_{j}) \log \frac{e^{d_{E}(\mathcal{A}_{i}, \mathcal{B}_{j})}}{\displaystyle\sum_{\substack{1\leq k \leq |\mathcal{M}|}} e^{d_{E}(\mathcal{A}_{i}, \mathcal{B}_{k})}} \\
    & = -\displaystyle\sum_{\substack{1\leq j \leq |\mathcal{M}|}} d(\mathcal{K}_{i}, \mathcal{K}_{j}) \log (\text{softmax}(d_{E}(\mathcal{A}_{i}, \mathcal{B}_{j})))
    \label{equ:ksgl}
\end{align}

In particular, when it reaches ideal optimum, $d(\mathcal{K}_{i}, \mathcal{K}_{j})$ and $d_{E}(\mathcal{A}_{i}, \mathcal{B}_{j})$ reaches the following relation:
\begin{equation}
    d(\mathcal{K}_{i}, \mathcal{K}_{j}) = \text{softmax}(d_{E}(\mathcal{A}_{i}, \mathcal{B}_{j}))
    \label{equ:knoledge-stable-relation}
\end{equation}
For detail proof, please refer to Appendix~\ref{appendix:knowledge-span-guide-proof}.
As a result, the corresponding $CL_{instance}$ is expressed as following:
\begin{align}
    CL_{instance} &= \frac{1}{|\mathcal{M}|}\displaystyle\sum_{1 \leq i \leq |\mathcal{M}|}KSGL(i)
    \label{eq:ie-loss}
\end{align}

\subsection{Architecture \& Contrastive Learning Loss} 

The K-M\textsuperscript{3}AID framework is a dual-CLIP architecture (see Figure \ref{fig:CLMA_Architecture}), comprising three critical components: a graph-level alignment module, a node-level alignment module, and a communication channel. The graph-level alignment module adopts a gradient-asymmetric CLIP mechanism. While two unimodal encoders work in conjunction, only the from-scratch graph encoder (GIN, \cite{xu2018powerful}) undergoes dynamic training throughout the process; the pre-trained spectrum encoder \cite{yang2021cross} remains fixed. Both encoders are complemented by dedicated projection layers, facilitating the mapping of embeddings into a joint space. The node-level alignment module adopts a gradient-symmetric CLIP mechanism. It is equipped with two from-scratch unimodal encoders, the node encoder and the peak encoder, as well as their dedicated projection layers. The graph encoder in the graph-level alignment module shares part of the weights with the node encoder in the node-level alignment module, serving as the communication channel.

The synergy between these two modules is pivotal, collectively contributing to the loss function, expressed as
\begin{equation}
    L = CL_{graph} + CL_{node},
\end{equation}
where $CL_{graph}$ represents the contrastive learning loss in the graph-level alignment module by Equation~\ref{eq:loss_function}, and  $CL_{node}$ represents the contrastive learning loss in the node-level alignment module by Equation~\ref{eq:node-loss-function}. 

Let $i$ denote the $i^{th}$ instance, and $j$ denote the $j^{th}$ instance. Then $x_{i}$ denotes the raw input in modality A for the $i^{th}$ instance and $y_{j}$ denotes the raw input in modality B for the $j^{th}$ instance. Suppose $f_{x}\left( \cdot \right)$ represent the encoding function for modality A, and $f_{y}\left( \cdot \right)$ denote the encoding function for modality B. In graph-level alignment module, these two encoding functions,  should map $x_{i}$  and $y_{j}$   to a proximate location in the joint embedding (inter-modality) if $i = j$.
\begin{align}
    CL_{graph}(i) &= - \log \frac{e^{\delta(x_{i}, y_{i})}}{\displaystyle\sum_{1 \leq j \leq N} e^{\delta(x_{i}, y_{j})}} \\
    &= -\text{log}(\text{softmax}(\delta(x_{i}, y_{i}))
    \label{eq:loss_function}
\end{align}
Where $\delta(x_{i}, y_{j}) = \left( f_{x}(x_{i})^{T} \cdot f_{y}(y_{j}) \right)$, $N$ is the total number of instances from the current batch. 

Thus, the total $CL_{graph}$ is expressed as following:
\begin{align}
    CL_{graph} = \frac{1}{N}\displaystyle\sum_{1 \leq i \leq N}CL_{graph}(i)
    \label{eq:rs-loss}
\end{align}
This design for the loss aims to match the same instance cross different modalities.

\section{Experiments}
\label{sec4}
To thoroughly evaluate the performance of K-M\textsuperscript{3}AID, we compare it with other baselines across various zero-shot downstream tasks, including molecular retrieval, isomer recognition, and peak assignment. Please refer to the detailed settings of pre-training and downstream tasks in Appendix \ref{appendix:exp-setting}.

\subsection{Chosen Knowledge Span-ppm}
\label{Knowledge-Span-ppm}
$^{13}$C NMR uncovers molecular structures by providing the chemical environments of carbon atoms and their magnetic responses to external fields, quantifying these features in parts per million (ppm) relative to a reference compound like tetramethylsilane (TMS), simplifying comparisons across experiments. Thus, continuous peak positions, measured in ppm, can serve as a robust knowledge span to facilitate instance-wise discrimination for this contrastive learning task.

For the node-level alignment module, $\mathcal{A}$ is the set of node embeddings for Carbon atoms in the molecular graph modality, and $\mathcal{B}$ is the set of peak embeddings for respective Carbon atoms in the NMR modality. $\mathcal{K}$ is the set of ppm values for each corresponding Carbon atom in $\mathcal{A}$ and $\mathcal{B}$. Suppose $S_{i}$ is the peak for the $i^{th}$ Carbon Atom, and $S_{j}$ is the  $j^{th}$ peak. $d(\cdot, \cdot)$ is then defined as follows:
\begin{align}
d(\mathcal{K}_{i}, \mathcal{K}_{j}) &= d(S_{i},S_{j})\\
& = softmax(\frac{\tau_{2}}{|S_{i} - S_{j}|+\tau_{1}})
\label{equ;ppm-guide-zhou}
\end{align}

where $\tau_{1}$ and $\tau_{2}$ are temperature hyper-parameter. For further discussion of selection about $\tau_{1}$ and $\tau_{2}$, please refer to Appendix~\ref{appendix:tau-ablation-study}. Then, the final form of contrastive loss for node-level alignment according to Equation \ref{equ:ksgl} and Equation \ref{eq:ie-loss} is as following: 
\begin{align}
CL_{node} &= -\frac{1}{|\mathcal{M}|}\sum_{j=1}^{|\mathcal{M}|} d(\mathcal K_{i}, \mathcal K_{j})\cdot log \frac{e^{d_{E}(\mathcal{A}_{i}, \mathcal{B}_{j})}}{\sum_{k=1}^{|\mathcal{M}|} e^{d_{E}(\mathcal{A}_{i}, \mathcal{B}_{k})}}
\label{eq:node-loss-function}
\end{align}
Here, $i$ and $j$ are indices of atoms. $\mathcal{A}_{i} \in \mathcal{A}$ represents the embedding of $i-th$ atom in modality A while $\mathcal{B}_{i} \in \mathcal{B}$ represents the embedding of $i-th$ atom in modality B, $\mathcal{K}_{j} \in \mathcal{K}$ represents peaks, and $th$ is the abbreviation for the threshold.

\subsection{Baselines}
\textbf{No Communication:} In contrast to the communicative mechanism of K-M\textsuperscript{3}AID, one of the baselines is established without the utilization of a communication channel (denoted as No Comm.).

\textbf{Strong-Pair-based Instance-Wise Discrimination:} We explore an alternative baseline where the knowledge-guided instance-wise discrimination in the node-level alignment module is replaced with strong-pair-based instance-wise discrimination (denoted as SP). SP enforces a precise match in node-level (atom-peak) alignment, ensuring that only correct pairs established during the training process are considered. The mathematical definition of a strong pair is as follows:
\begin{equation}
    \textit{Strong Pair: } |S_{i} - S_{j}| = 0,
\end{equation}
where $i$, $j$ represent the indices of peaks.

\textbf{Weak-Pair-based Instance-Wise Discrimination:} We replace the knowledge-guided instance-wise discrimination with weak-pair-based instance-wise discrimination (denoted as WP) in the node-level alignment module. WP broadens the matching criteria of SP, allowing for multiple matches within a specified threshold set for the distance of their corresponding parts per million (ppm, referenced in Section ~\ref{Knowledge-Span-ppm}). The mathematical definition of a weak pair is as follows:
\begin{equation}
    \textit{Weak Pair: } |S_{i} - S_{j}| \le th,
\end{equation}
where $i$, $j$ represent the indices of peaks.

\subsection{Results}
\subsubsection{Validation Performance}
The K-M\textsuperscript{3}AID model showcases an impressive validation accuracy of 95.5\% in aligning molecules with spectra within the graph-level alignment module. In direct comparison, K-M\textsuperscript{3}AID outperforms alternative models such as SP, WP, and the model without a communication mechanism (No Comm.), demonstrating superior performance with a margin ranging from approximately 1\% to 6\% in the graph-level alignment module (refer to Table~\ref{tab:knowledge-threshold-comparison}). Notably, SP significantly outperforms WP, and as the matching criteria threshold widens, the performance of the latter deteriorates.

In the context of peak-atom alignment within the node-level alignment module, K-M\textsuperscript{3}AID and the model without a communication mechanism exhibit comparable accuracies (refer to Table~\ref{tab:knowledge-threshold-comparison}). However, K-M\textsuperscript{3}AID showcases slightly better stability across 5-fold cross-validation. Moreover, K-M\textsuperscript{3}AID demonstrates superiority in peak-atom alignment when compared to SP and WP. This superiority may arise from the inherent limitations of both strong and weak pair definitions, which fail to precisely calibrate the diverse relationships among the elements. This finding is further supported by the significant decreases in the accuracy of peak-atom alignment as the threshold of weak pair increases.

\begin{table*}[ht]
    \caption{Batch-wise validation accuracy (\%) of K-M\textsuperscript{3}AID and baselines with $epochs = 200$. For WP, the threshold is configured at 1, 5, and 10 ppm.
    }
    \label{tab:threshold-sensitivity}
    \small
    \centering
    \begin{tabular}{cccccccc}
        \hline
        Alignment & SP & WP(\textit{th}=1) & WP(\textit{th}=5) & WP(\textit{th}=10) & No Comm. & K-M\textsuperscript{3}AID \\
        \hline
        Graph-Level & 93.5$\pm$0.6 & 91.3$\pm$0.8 & 90.3$\pm$0.6 & 88.4$\pm$1.4 & 94.6$\pm$0.4 & \textbf{95.5$\pm$0.4} \\
        Node-Level & 89.3$\pm$0.4 & 83.7$\pm$0.6  & 79.8$\pm$0.5 & 66.1$\pm$2.5 & 90.4$\pm$0.2  & \textbf{90.3$\pm$0.1} \\
        \hline
    \end{tabular}
    \vspace{-3pt} 
    \label{tab:knowledge-threshold-comparison}
\end{table*}


\subsubsection {Performance on Zero-Shot Molecular Retrieval}

We conduct a systematic evaluation of the effectiveness of our K-M\textsuperscript{3}AID model, comparing it with baseline models in the zero-shot molecular retrieval task across datasets of varying magnitudes. Detailed results are presented in Table~\ref{tab:pubmed-retrievel}. The K-M\textsuperscript{3}AID model consistently attains an impressive top-1 accuracy of approximately 95.8\% in molecular retrieval when the molecular reference library comprises 100 entries. This performance surpasses that of alternative mechanisms such as SP (95.3\%), WP (92.9\%), and No Comm. (94.8\%). As the molecular reference library expands to 1000 entries, the K-M\textsuperscript{3}AID model exhibits notable superiority, achieving accuracy levels of 80.4\%, 1.8\%, 8.7\%, and 2.8\% higher than SP, WP, and No Comm. mechanisms, respectively. The advantage of K-M\textsuperscript{3}AID becomes even more pronounced when the library size reaches 10,000 entries. In this scenario, K-M\textsuperscript{3}AID yields 46.3\% at top-1 accuracy, showcasing advancements of 10.5\%, 13.6\%, and 6.2\% over SP, WP, and No Comm. mechanisms, respectively. Even with larger molecular reference libraries, such as 100,000 and 1,000,000 entries, K-M\textsuperscript{3}AID consistently outshines SP, WP, and No Comm. mechanisms. These compelling results distinguish the K-M\textsuperscript{3}AID model as an exceptional choice in scenarios demanding robust performance in molecular retrieval tasks.

\setlength{\tabcolsep}{1.5pt}
\begin{table}[!ht]
    \centering
    \small
    \caption{Zero-shot molecular retrieval at top 1 accuracy across datasets of varying sizes. (For more statistics, such as top 5, top 10 and top 25, please refer to Appendix Table ~\ref{tab:pubmed-retrievel-th-whole}. For similarity comparison between the molecules and the Top 1 neighbor by different retrieval methods, please consult Appendix Table ~\ref{tab:pubmed-retrievel-th-whole-similarity}.)}
        \begin{tabular}{cccccc}
        \hline      
        \multicolumn{1}{c}{Method}      
        & \multicolumn{1}{c}{$10^2$} & \multicolumn{1}{c}{$10^3$} & \multicolumn{1}{c}{$10^4$} & \multicolumn{1}{c}{$10^5$} & \multicolumn{1}{c}{$10^6$} \\ \hline
        \multirow{1}{*}{K-M\textsuperscript{3}AID} 
         & 95.8$\pm$1.0 & 80.4$\pm$3.9 & 46.3$\pm$1.2 & 18.0$\pm$0.8 & 5.8$\pm$1.7 \\
        \multirow{1}{*}{SP}                   
        & 95.3$\pm$0.8                    & 78.6$\pm$2.7                      & 35.8$\pm$3.8                       & 12.9$\pm$1.6                        & 3.4$\pm$0.9                          \\
                \multirow{1}{*}{WP (\textit{th} = 1)} &  92.9$\pm$0.6                     & 71.7$\pm$1.0                      & 32.7$\pm$1.3                       & 10.7$\pm$0.5                        & 3.6$\pm$0.7                          \\
                 \multirow{1}{*}{No Comm.} 
        & 94.8$\pm$1.2                     & 77.6$\pm$1.4                      & 40.1$\pm$1.2                       & 14.4$\pm$0.9                        & 4.1$\pm$1.1                          \\
                                         \hline
        \end{tabular}
        \label{tab:pubmed-retrievel}
        \vspace{2pt} 
\end{table}

\subsubsection {Performance on Zero-Shot Isomer Recognition}

K-M\textsuperscript{3}AID stands out prominently when compared to SP, WP, and no communication approaches in the task of zero-shot isomer recognition, achieving an exceptional 100\% accuracy across given groups of isomers (refer to Table~\ref{tab:molecule-match}). These empirical observations underscore the advantages of K-M\textsuperscript{3}AID in the context of isomer recognition. The superiority of K-M\textsuperscript{3}AID over the no communication baseline demonstrates the positive impact of node-level alignment on graph-level alignment, emphasizing the potency of meta-learning.

\setlength{\tabcolsep}{1pt}
\begin{table}[!ht]
    \centering
    \small
    \caption{Zero-shot isomer recognition accuracy (\%) of K-M\textsuperscript{3}AID and baselines.}
    \begin{tabular}{cccccc}
    \hline
        Formula & \#Isomers & SP & WP (th=1) & No Comm. & K-M\textsuperscript{3}AID \\ \hline
        $\text{C}_{4}\text{H}_{6}\text{O}$ & 15  &86.7 & 86.7 & 86.7 & \textbf{100.0} \\
        $\text{C}_{9}\text{H}_{9}\text{N}$ & 15 &86.7 & 80.0 & \textbf{100.0} & \textbf{100.0} \\ 
        $\text{C}_{7}\text{H}_{11}\text{NO}_{3}$ & 14 &78.6 & 85.7 & 85.7 & \textbf{100.0} \\ 
        $\text{C}_{6}\text{H}_{13}\text{NO}$ & 23 &91.3 & 91.3 & \textbf{100.0} & \textbf{100.0} \\ 
        $\text{C}_{8}\text{H}_{7}\text{NO}_{4}$ & 13 &92.3 & 84.6 & 92.3 & \textbf{100.0} \\ 
        $\text{C}_{15}\text{H}_{24}\text{O}$ & 16  &93.8 & 93.8 & \textbf{100.0} & \textbf{100.0} \\ 
        $\text{C}_{11}\text{H}_{14}$ & 10 &90.0 & 80.0 & 70.0 & \textbf{100.0} \\ 
        $\text{C}_{7}\text{H}_{15}\text{NO}$ & 14&85.7 & 85.7 &  \textbf{100.0} & \textbf{100.0} \\ 
        $\text{C}_{10}\text{H}_{16}\text{O}_{2}$ & 26 &92.3 & 84.6 & \textbf{100.0} & \textbf{100.0} \\ 
        $\text{C}_{8}\text{H}_{15}\text{N}$ & 11 &81.8 & 90.9 & \textbf{100.0} & \textbf{100.0} \\ 
        \hline
    \end{tabular}
    \label{tab:molecule-match}
    
\end{table}

\subsubsection {Performance on Zero-Shot Peak Assignment}

The K-M\textsuperscript{3}AID model demonstrates a validation accuracy surpassing 90\% for peak assignment (peak-atom alignment) within the node-level alignment module after 200 epochs (see Figure~\ref{fig:atom-alignment-acc}.A). Notably, the model achieves a 100\% accuracy rate in 74.1\% of molecules containing fewer than 10 carbon atoms (see Figure~\ref{fig:atom-alignment-acc}.B). For molecules with carbon atom counts ranging from 10 to 20, the model attains 100\% accuracy in 37.2\% of cases (see Figure~\ref{fig:atom-alignment-acc}.C). Furthermore, it achieves an accuracy exceeding 80\% in more than 50\% of cases pertaining to molecules containing more than 20 carbon atoms (see Figure~\ref{fig:atom-alignment-acc}.D). Additionally, to further illustrate the power of K-M\textsuperscript{3}AID on peak assignment, we present two complex natural product molecules featuring multiple rings (4 and 4, respectively) and stereogenic (chiral) centers (6 and 8, respectively) in Figure~\ref{fig:atom-alignment-specific}.

\begin{figure*}[ht!]
    \centering
    \includegraphics[width=0.95\textwidth]{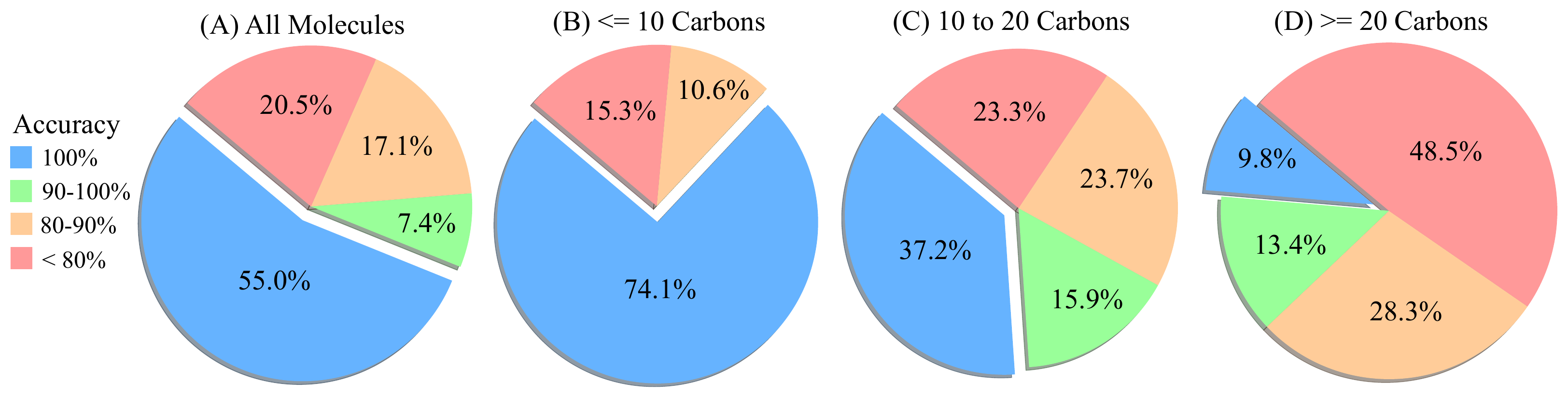}
    \caption{The statistics of zero-shot peak assignment.}
    \label{fig:atom-alignment-acc}
\end{figure*}

K-M\textsuperscript{3}AID demonstrates superior performance in peak assignment compared to SP and WP. Our case studies reveal that the limitations of SP and WP become particularly pronounced in two scenarios: 1) When local contexts of specific atoms exhibit a high degree of similarity.
2) When certain atoms display symmetric mapping within the same molecule.

In the former scenario, exemplified by molecular A in Figure~\ref{fig:symmetric-comparison}, atom 0 and atom 4 are secondary carbons (attaching to 2 carbons and 2 hydrogens), nearly symmetric on the same 5-member ring, corresponding to the peak position measured in ppm of 27.0 and 29.8, respectively (for the definition of ppm, please refer to Section \ref{Knowledge-Span-ppm}). The similar local content of these two atoms fools SP and WP. Meanwhile, atom 1 and atom 3 are tertiary carbons (attaching to 3 carbons and 1 hydrogen), nearly symmetric on the same 5-member ring, corresponding to the peak position measured in ppm of 54.5 and 44.1, respectively. Only WP fails to distinguish and align them.

In the latter scenario, exemplified by molecular B in Figure~\ref{fig:symmetric-comparison}, there exist instances one-to-one and one-to-many for atomic-level alignment within the molecular configuration. Both SP and WP methods misalign certain atoms with other atoms with small ppm differences (less than 3 ppm in this case), rather than aligning them with themselves or their symmetric counterparts. In contrast, the K-M\textsuperscript{3}AID approach excels in both scenarios by discerning each one of the atoms, which is attributed to the full utilization of ppm difference distance learning. (For additional cases, please refer to Appendix Figure~\ref{fig:molecule-compare-appendix})


\begin{figure*}[ht!]
    \centering
    \includegraphics[width=0.95\textwidth]{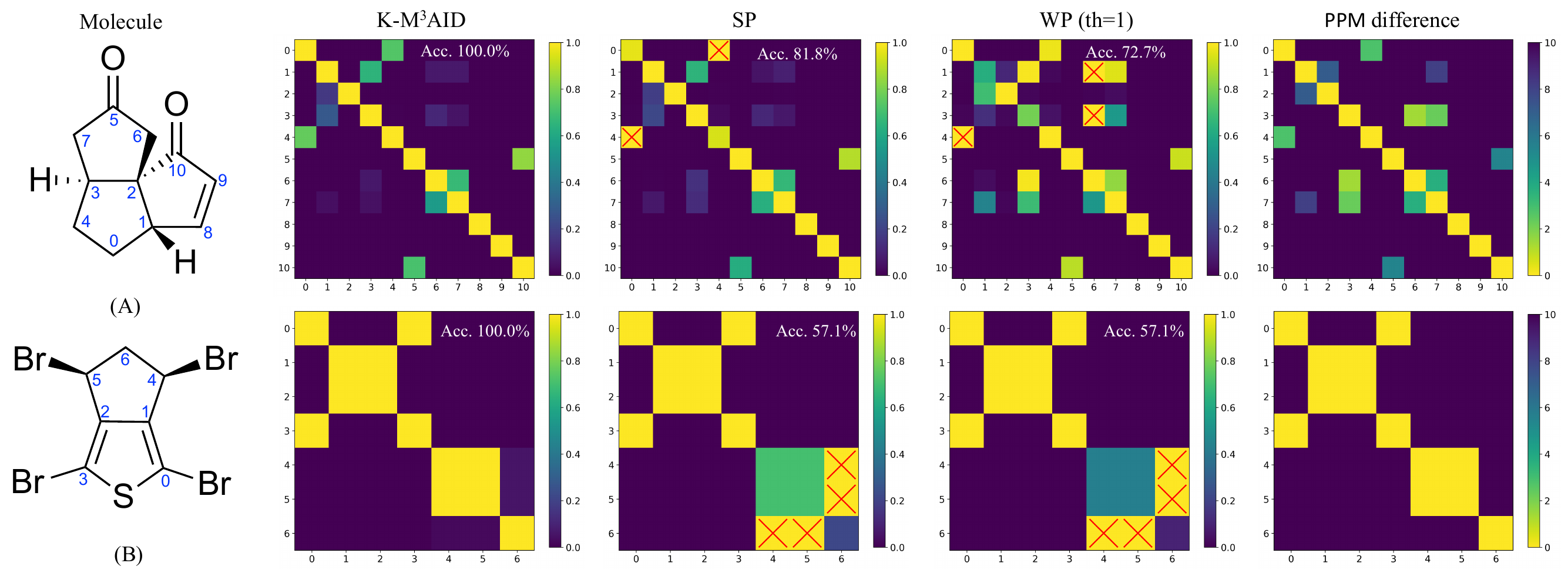}
    \caption{Case study of peak assignment. Yellow cells in PPM difference represent the ground truth alignment, and red cross represents the wrong alignment. For the definition of ppm, please refer to \ref{Knowledge-Span-ppm}. For additional cases, please refer to Appendix Figure~\ref{fig:molecule-compare-appendix}}
    \label{fig:symmetric-comparison}
\end{figure*}

\section{Related Work}
\label{sec2}

\textbf{Multimodal Instance-Wise Discrimination:} As mentioned in the preliminaries, instance discrimination \cite{le2020ContraRepSurvey,zolfaghari2021crossclr, morgado2021audio,liu2023SelfContrast}, an important part of contrastive learning, distinguishes individual instances without explicit class labels. Transitioning into multimodal contrastive learning, it can be categorized into two general approaches: strong-pair-based \cite{oord2019representation,jaiswal2021contraself,liu2023SelfContrast} and weak-pair-based \cite{salakhutdinov2007softneighbor,frosst2019analyzing,liang2021metaalignment} instance-wise discrimination. The strong-pair-based approach, such as the Noise Contrastive Estimation (NCE) method, enforces a precise one-to-one correspondence for real samples with artificially generated noise samples. An example of a positive pair can be a noise-added picture of a zebra with the text description of a zebra. Instead of one-to-one correspondences, the weak-pair-based approach relaxes the positive pairs to broader semantic correspondences. An example of a positive pair can be a picture of a zebra with the text description of a horse but not with the text description of a tiger.


\textbf{Multimodal Meta-Alignment:} 
Within the realm of multimodal alignment, multimodal meta-alignment is a novel method for aligning representation spaces using paired cross-modal data with different similarity levels while ensuring quick generalization to new tasks across different modalities \cite{liang2021metaalignment}. This approach can be observed at different levels, including the intermediate and fundamental (irreducible) element level. Examples of this method at the intermediate level can be found in research on Cross-Modal Generalization \cite{chen2017deep,li2020unimo,liang2021metaalignment,zhang2021cross} and Livestreaming Product Recognition \cite{yang2023crossview}. While these studies showcase how multimodal meta-alignment operates at a broad objective level, the application of multimodal meta-alignment at the most fundamental element level remains underexplored in current research.

\section{Conclusion and Future Work}
\label{sec5}
In this paper, we introduced the K-M\textsuperscript{3}AID (Knowledge-Guided Multi-Level Multimodal Alignment with Instance-Wise Discrimination) framework, incorporating both graph-level and node-level alignment. Its effectiveness was demonstrated through multiple zero-shot tasks, including molecular retrieval, isomer recognition, and peak assignment. The significance of knowledge-guided instance-wise discrimination is underscored through various metrics and case studies. Moreover, the findings from molecular retrieval and isomer recognition highlight the favorable influence of node-level alignment on graph-level alignment. This emphasizes the successful integration of meta-learning within our hierarchical alignment framework. While our framework achieves an atomic-level alignment overall accuracy of 100\% for 55\% of cases, it drops significantly to 9.8\% when handling molecules with more than 20 carbon atoms. Currently, our graph encoder operates on 2D molecular graphs with basic node and edge features. This implementation potentially constrains its ability to generate precise node embeddings for distinguishing atoms in highly complex scenarios. Future developments could benefit from incorporating a 3D-based graph, holding substantial potential to enhance performance in such complex situations.

\section*{Accessibility}
The code and dataset will be made available upon the date of publication.

{\small
\bibliography{CLMA}}

\begin{thebibliography}{47}
\providecommand{\natexlab}[1]{#1}
\providecommand{\url}[1]{\texttt{#1}}
\expandafter\ifx\csname urlstyle\endcsname\relax
  \providecommand{\doi}[1]{doi: #1}\else
  \providecommand{\doi}{doi: \begingroup \urlstyle{rm}\Url}\fi

\bibitem[Akkus et~al.(2023)Akkus, Chu, Djakovic, Jauch-Walser, Koch, Loss, Marquardt, Moldovan, Sauter, Schneider, Schulte, Urbanczyk, Goschenhofer, Heumann, Hvingelby, Schalk, and Aßenmacher]{akkus2023multimodal}
Akkus, C., Chu, L., Djakovic, V., Jauch-Walser, S., Koch, P., Loss, G., Marquardt, C., Moldovan, M., Sauter, N., Schneider, M., Schulte, R., Urbanczyk, K., Goschenhofer, J., Heumann, C., Hvingelby, R., Schalk, D., and Aßenmacher, M.
\newblock Multimodal deep learning, 2023.

\bibitem[Baltrusaitis et~al.(2017)Baltrusaitis, Ahuja, and Morency]{baltrusaitis2017multimodal}
Baltrusaitis, T., Ahuja, C., and Morency, L.-P.
\newblock Multimodal machine learning: A survey and taxonomy, 2017.

\bibitem[Chen et~al.(2020)Chen, Wang, Guo, Orekhov, and Qu]{chen2020review}
Chen, D., Wang, Z., Guo, D., Orekhov, V., and Qu, X.
\newblock Review and prospect: deep learning in nuclear magnetic resonance spectroscopy.
\newblock \emph{Chemistry--A European Journal}, 26\penalty0 (46):\penalty0 10391--10401, 2020.

\bibitem[Chen et~al.(2017)Chen, Srivastava, Duan, and Xu]{chen2017deep}
Chen, L., Srivastava, S., Duan, Z., and Xu, C.
\newblock Deep cross-modal audio-visual generation.
\newblock In \emph{Proceedings of the on Thematic Workshops of ACM Multimedia 2017}, pp.\  349--357, 2017.

\bibitem[Chhetri et~al.(2018)Chhetri, Lavoie, Sweeney-Jones, and Kubanek]{chhetri2018recent}
Chhetri, B.~K., Lavoie, S., Sweeney-Jones, A.~M., and Kubanek, J.
\newblock Recent trends in the structural revision of natural products.
\newblock \emph{Natural product reports}, 35\penalty0 (6):\penalty0 514--531, 2018.

\bibitem[Claridge(2016)]{claridge2016high}
Claridge, T.~D.
\newblock \emph{High-resolution NMR techniques in organic chemistry}, volume~27.
\newblock Elsevier, 2016.

\bibitem[Dice(1945)]{dice_coefficient}
Dice, L.~R.
\newblock Measures of the amount of ecologic association between species.
\newblock \emph{Ecology}, 26\penalty0 (3):\penalty0 297--302, 1945.

\bibitem[Finn et~al.(2017)Finn, Abbeel, and Levine]{finn2017metamodel}
Finn, C., Abbeel, P., and Levine, S.
\newblock Model-agnostic meta-learning for fast adaptation of deep networks.
\newblock In \emph{International conference on machine learning}, pp.\  1126--1135. PMLR, 2017.

\bibitem[Frosst et~al.(2019)Frosst, Papernot, and Hinton]{frosst2019analyzing}
Frosst, N., Papernot, N., and Hinton, G.
\newblock Analyzing and improving representations with the soft nearest neighbor loss, 2019.

\bibitem[Guan et~al.(2021)Guan, Sowndarya, Gallegos, John, and Paton]{guan2021real}
Guan, Y., Sowndarya, S.~S., Gallegos, L.~C., John, P. C.~S., and Paton, R.~S.
\newblock Real-time prediction of 1 h and 13 c chemical shifts with dft accuracy using a 3d graph neural network.
\newblock \emph{Chemical Science}, 12\penalty0 (36):\penalty0 12012--12026, 2021.

\bibitem[Gunther \& Gunther(1994)Gunther and Gunther]{gunther1994nmr}
Gunther, H. and Gunther, H.
\newblock \emph{NMR spectroscopy: basic principles, concepts, and applications in chemistry}.
\newblock John Wiley \& Sons Chichester, UK, 1994.

\bibitem[Jaiswal et~al.(2021)Jaiswal, Babu, Zadeh, Banerjee, and Makedon]{jaiswal2021contraself}
Jaiswal, A., Babu, A.~R., Zadeh, M.~Z., Banerjee, D., and Makedon, F.
\newblock A survey on contrastive self-supervised learning.
\newblock \emph{Technologies}, 9\penalty0 (1), 2021.
\newblock ISSN 2227-7080.
\newblock \doi{10.3390/technologies9010002}.
\newblock URL \url{https://www.mdpi.com/2227-7080/9/1/2}.

\bibitem[Jonas et~al.(2022)Jonas, Kuhn, and Schl{\"o}rer]{jonas2022prediction}
Jonas, E., Kuhn, S., and Schl{\"o}rer, N.
\newblock Prediction of chemical shift in nmr: A review.
\newblock \emph{Magnetic Resonance in Chemistry}, 60\penalty0 (11):\penalty0 1021--1031, 2022.

\bibitem[Karpathy \& Fei-Fei(2015)Karpathy and Fei-Fei]{karpathy2015deep}
Karpathy, A. and Fei-Fei, L.
\newblock Deep visual-semantic alignments for generating image descriptions.
\newblock In \emph{Proceedings of the IEEE conference on computer vision and pattern recognition}, pp.\  3128--3137, 2015.

\bibitem[Kim et~al.(2023)Kim, Chen, Cheng, Gindulyte, He, He, Li, Shoemaker, Thiessen, Yu, et~al.]{kim2023pubchem}
Kim, S., Chen, J., Cheng, T., Gindulyte, A., He, J., He, S., Li, Q., Shoemaker, B.~A., Thiessen, P.~A., Yu, B., et~al.
\newblock Pubchem 2023 update.
\newblock \emph{Nucleic acids research}, 51\penalty0 (D1):\penalty0 D1373--D1380, 2023.

\bibitem[Kuhn(2022)]{kuhn2022applications}
Kuhn, S.
\newblock Applications of machine learning and artificial intelligence in nmr, 2022.

\bibitem[Lake et~al.(2017)Lake, Ullman, Tenenbaum, and Gershman]{lake2017building}
Lake, B.~M., Ullman, T.~D., Tenenbaum, J.~B., and Gershman, S.~J.
\newblock Building machines that learn and think like people.
\newblock \emph{Behavioral and brain sciences}, 40:\penalty0 e253, 2017.

\bibitem[Le-Khac et~al.(2020{\natexlab{a}})Le-Khac, Healy, and Smeaton]{le2020ContraRepSurvey}
Le-Khac, P.~H., Healy, G., and Smeaton, A.~F.
\newblock Contrastive representation learning: A framework and review.
\newblock \emph{IEEE Access}, 8:\penalty0 193907--193934, 2020{\natexlab{a}}.
\newblock \doi{10.1109/ACCESS.2020.3031549}.

\bibitem[Le-Khac et~al.(2020{\natexlab{b}})Le-Khac, Healy, and Smeaton]{le2020contrastive}
Le-Khac, P.~H., Healy, G., and Smeaton, A.~F.
\newblock Contrastive representation learning: A framework and review.
\newblock \emph{Ieee Access}, 8:\penalty0 193907--193934, 2020{\natexlab{b}}.

\bibitem[Li et~al.(2020)Li, Gao, Niu, Xiao, Liu, Liu, Wu, and Wang]{li2020unimo}
Li, W., Gao, C., Niu, G., Xiao, X., Liu, H., Liu, J., Wu, H., and Wang, H.
\newblock Unimo: Towards unified-modal understanding and generation via cross-modal contrastive learning.
\newblock \emph{arXiv preprint arXiv:2012.15409}, 2020.

\bibitem[Li et~al.(2021)Li, Liang, Zhao, Cui, Ouyang, Shao, Yu, and Yan]{li2021supervision}
Li, Y., Liang, F., Zhao, L., Cui, Y., Ouyang, W., Shao, J., Yu, F., and Yan, J.
\newblock Supervision exists everywhere: A data efficient contrastive language-image pre-training paradigm.
\newblock \emph{arXiv preprint arXiv:2110.05208}, 2021.

\bibitem[Liang et~al.(2021)Liang, Wu, Ziyin, Morency, and Salakhutdinov]{liang2021metaalignment}
Liang, P.~P., Wu, P., Ziyin, L., Morency, L.-P., and Salakhutdinov, R.
\newblock Cross-modal generalization: Learning in low resource modalities via meta-alignment.
\newblock In \emph{Proceedings of the 29th ACM International Conference on Multimedia}, New York, NY, USA, 2021. Association for Computing Machinery.
\newblock ISBN 9781450386517.
\newblock \doi{10.1145/3474085.3475247}.
\newblock URL \url{https://doi.org/10.1145/3474085.3475247}.

\bibitem[Liu et~al.(2021)Liu, Zhang, Hou, Mian, Wang, Zhang, and Tang]{liu2021selfcontrastive}
Liu, X., Zhang, F., Hou, Z., Mian, L., Wang, Z., Zhang, J., and Tang, J.
\newblock Self-supervised learning: Generative or contrastive.
\newblock \emph{IEEE transactions on knowledge and data engineering}, 35\penalty0 (1):\penalty0 857--876, 2021.

\bibitem[Liu et~al.(2023)Liu, Zhang, Hou, Mian, Wang, Zhang, and Tang]{liu2023SelfContrast}
Liu, X., Zhang, F., Hou, Z., Mian, L., Wang, Z., Zhang, J., and Tang, J.
\newblock Self-supervised learning: Generative or contrastive.
\newblock \emph{IEEE Transactions on Knowledge and Data Engineering}, 35\penalty0 (1):\penalty0 857--876, 2023.
\newblock \doi{10.1109/TKDE.2021.3090866}.

\bibitem[Morgado et~al.(2021)Morgado, Vasconcelos, and Misra]{morgado2021audio}
Morgado, P., Vasconcelos, N., and Misra, I.
\newblock Audio-visual instance discrimination with cross-modal agreement.
\newblock In \emph{Proceedings of the IEEE/CVF Conference on Computer Vision and Pattern Recognition}, pp.\  12475--12486, 2021.

\bibitem[Nicolaou \& Snyder(2005)Nicolaou and Snyder]{nicolaou2005chasing}
Nicolaou, K. and Snyder, S.~A.
\newblock Chasing molecules that were never there: misassigned natural products and the role of chemical synthesis in modern structure elucidation.
\newblock \emph{Angewandte Chemie International Edition}, 44\penalty0 (7):\penalty0 1012--1044, 2005.

\bibitem[Radford et~al.(2021)Radford, Kim, Hallacy, Ramesh, Goh, Agarwal, Sastry, Askell, Mishkin, Clark, et~al.]{radford2021learning}
Radford, A., Kim, J.~W., Hallacy, C., Ramesh, A., Goh, G., Agarwal, S., Sastry, G., Askell, A., Mishkin, P., Clark, J., et~al.
\newblock Learning transferable visual models from natural language supervision.
\newblock In \emph{International conference on machine learning}, pp.\  8748--8763. PMLR, 2021.

\bibitem[Russel(1980)]{russel_similarity}
Russel, J.~A.
\newblock A circumplex model of affect.
\newblock \emph{Journal of Personality and Social Psychology}, 39\penalty0 (6):\penalty0 1161--1178, 1980.

\bibitem[Salakhutdinov \& Hinton(2007)Salakhutdinov and Hinton]{salakhutdinov2007softneighbor}
Salakhutdinov, R. and Hinton, G.
\newblock Learning a nonlinear embedding by preserving class neighbourhood structure.
\newblock In Meila, M. and Shen, X. (eds.), \emph{Proceedings of the Eleventh International Conference on Artificial Intelligence and Statistics}, volume~2 of \emph{Proceedings of Machine Learning Research}, pp.\  412--419, San Juan, Puerto Rico, 21--24 Mar 2007. PMLR.
\newblock URL \url{https://proceedings.mlr.press/v2/salakhutdinov07a.html}.

\bibitem[Salton \& McGill(1986)Salton and McGill]{cosine_similarity}
Salton, G. and McGill, M.~J.
\newblock \emph{Introduction to Modern Information Retrieval}.
\newblock McGraw-Hill, Inc., 1986.

\bibitem[Serra et~al.(2018)Serra, Pascual, and Karatzoglou]{serra2018towards}
Serra, J., Pascual, S., and Karatzoglou, A.
\newblock Towards a universal neural network encoder for time series.
\newblock In \emph{CCIA}, pp.\  120--129, 2018.

\bibitem[Sokal \& Sneath(1963)Sokal and Sneath]{sokal_similarity}
Sokal, R.~R. and Sneath, P. H.~A.
\newblock \emph{Principles of Numerical Taxonomy}.
\newblock W. H. Freeman and Company, 1963.

\bibitem[Steinbeck et~al.(2003)Steinbeck, Krause, and Kuhn]{steinbeck2003nmrshiftdb}
Steinbeck, C., Krause, S., and Kuhn, S.
\newblock Nmrshiftdb constructing a free chemical information system with open-source components.
\newblock \emph{Journal of chemical information and computer sciences}, 43\penalty0 (6):\penalty0 1733--1739, 2003.

\bibitem[Summaira et~al.(2021)Summaira, Li, Shoib, Li, and Abdul]{summaira2021recent}
Summaira, J., Li, X., Shoib, A.~M., Li, S., and Abdul, J.
\newblock Recent advances and trends in multimodal deep learning: A review, 2021.

\bibitem[Tanimoto(1957)]{tanimoto_similarity}
Tanimoto, T.~T.
\newblock An elementary mathematical theory of classification and prediction.
\newblock \emph{Internal Report 17, IBM}, 1957.

\bibitem[van~den Oord et~al.(2019)van~den Oord, Li, and Vinyals]{oord2019representation}
van~den Oord, A., Li, Y., and Vinyals, O.
\newblock Representation learning with contrastive predictive coding, 2019.

\bibitem[Vanschoren(2018)]{vanschoren2018meta}
Vanschoren, J.
\newblock Meta-learning: A survey.
\newblock \emph{arXiv preprint arXiv:1810.03548}, 2018.

\bibitem[Vilalta \& Drissi(2002)Vilalta and Drissi]{vilalta2002meta}
Vilalta, R. and Drissi, Y.
\newblock A perspective view and survey of meta-learning.
\newblock \emph{Artificial intelligence review}, 18:\penalty0 77--95, 2002.

\bibitem[Wu et~al.(2023)Wu, Ye, Zhuang, Chen, Zhang, Wu, and Xu]{wu2023elucidating}
Wu, A., Ye, Q., Zhuang, X., Chen, Q., Zhang, J., Wu, J., and Xu, X.
\newblock Elucidating structures of complex organic compounds using a machine learning model based on the 13c nmr chemical shifts.
\newblock \emph{Precision Chemistry}, 1\penalty0 (1):\penalty0 57--68, 2023.

\bibitem[Wu et~al.(2022)Wu, Sun, Zhang, Xie, and Cui]{wu2022graph}
Wu, S., Sun, F., Zhang, W., Xie, X., and Cui, B.
\newblock Graph neural networks in recommender systems: a survey.
\newblock \emph{ACM Computing Surveys}, 55\penalty0 (5):\penalty0 1--37, 2022.

\bibitem[Wu et~al.(2018)Wu, Xiong, Yu, and Lin]{wu2018unsupervised}
Wu, Z., Xiong, Y., Yu, S.~X., and Lin, D.
\newblock Unsupervised feature learning via non-parametric instance discrimination.
\newblock In \emph{Proceedings of the IEEE conference on computer vision and pattern recognition}, pp.\  3733--3742, 2018.

\bibitem[Xu et~al.(2018)Xu, Hu, Leskovec, and Jegelka]{xu2018powerful}
Xu, K., Hu, W., Leskovec, J., and Jegelka, S.
\newblock How powerful are graph neural networks?
\newblock \emph{arXiv preprint arXiv:1810.00826}, 2018.

\bibitem[Yang et~al.(2023)Yang, Chen, Li, Cheng, Liu, Chen, and Li]{yang2023crossview}
Yang, W., Chen, Y., Li, Y., Cheng, Y., Liu, X., Chen, Q., and Li, H.
\newblock Cross-view semantic alignment for livestreaming product recognition, 2023.

\bibitem[Yang et~al.(2021)Yang, Song, Yang, Yao, Zhang, Shi, Ji, Deng, and Wang]{yang2021cross}
Yang, Z., Song, J., Yang, M., Yao, L., Zhang, J., Shi, H., Ji, X., Deng, Y., and Wang, X.
\newblock Cross-modal retrieval between 13c nmr spectra and structures for compound identification using deep contrastive learning.
\newblock \emph{Analytical Chemistry}, 93\penalty0 (50):\penalty0 16947--16955, 2021.

\bibitem[Yu et~al.(2021)Yu, Myoung, and Ahn]{yu2021recent}
Yu, H.-Y., Myoung, S., and Ahn, S.
\newblock Recent applications of benchtop nuclear magnetic resonance spectroscopy.
\newblock \emph{Magnetochemistry}, 7\penalty0 (9):\penalty0 121, 2021.

\bibitem[Zhang et~al.(2021)Zhang, Koh, Baldridge, Lee, and Yang]{zhang2021cross}
Zhang, H., Koh, J.~Y., Baldridge, J., Lee, H., and Yang, Y.
\newblock Cross-modal contrastive learning for text-to-image generation.
\newblock In \emph{Proceedings of the IEEE/CVF conference on computer vision and pattern recognition}, pp.\  833--842, 2021.

\bibitem[Zolfaghari et~al.(2021)Zolfaghari, Zhu, Gehler, and Brox]{zolfaghari2021crossclr}
Zolfaghari, M., Zhu, Y., Gehler, P., and Brox, T.
\newblock Crossclr: Cross-modal contrastive learning for multi-modal video representations.
\newblock In \emph{Proceedings of the IEEE/CVF International Conference on Computer Vision}, pp.\  1450--1459, 2021.

\end{thebibliography}
\bibliographystyle{icml2024}

\newpage
\appendix
\onecolumn
\begin{center}
    \textbf{\Large{Appendix}}
\end{center}
\counterwithin{figure}{section}
\counterwithin{equation}{section}
\counterwithin{table}{section}
\setcounter{figure}{0} 
\setcounter{equation}{0} 
\setcounter{table}{0} 

\section{Revisiting Knowledge Span Guided Loss}
\label{appendix:knowledge-span-guide-proof}
\begin{theorem}[Knowledge Span Guided Loss]
Suppose $\mathcal{M}$ is the set of instances. $\mathcal{A} \subset \mathbb{R}^{d_1}$ is the set of tunable instances' embeddings in modality A, $\mathcal{B} \subset \mathbb{R}^{d_1}$ is the set of tuable instances' embeddings in modality B, and $\mathcal{K} \subset \mathbb{R}^{d_2}$ is the corresponding fixed knowledge span label that can guide the relative distance learning between components in $\mathcal{A}$ and $\mathcal{B}$. Thus, the size of $\mathcal{A}$, $\mathcal{B}$, $\mathcal{K}$ are $|\mathcal{M}|$, respectively. 

Let $\mathcal{A}_i$ be the $i^{th}$ instance embedding of $\mathcal{A}$, and $\mathcal{B}_j$ be the $j^{th}$ instance embedding of $\mathcal{B}$. We define the distance function between $\mathcal{A}_i$ and $\mathcal{B}_j$ as $d_{E}(\mathcal{A}_i, \mathcal{B}_j) = \mathcal{A}_i \cdot \mathcal{B}_j \rightarrow \mathbb{R}^{+}$, and calibration function $d(\mathcal{K}_{i}, \mathcal{K}_{j}) \rightarrow \mathbb{R}^{+}$ with a monotonic property and constraint $\sum_{j=1}^{|\mathcal{M}|}d(\mathcal{K}_{i}, \mathcal{K}_{j}) = 1$, in which $\mathcal{K}_{i}$ and $\mathcal{K}_{j}$ serve as the designated Knowledge Span Label. We introduce the Knowledge Span Guided Loss (KSGL) as follows:
\begin{align}
     KSGL(i) &=-\displaystyle\sum_{\substack{1\leq j \leq |\mathcal{M}|}} d(\mathcal{K}_{i}, \mathcal{K}_{j}) \log \frac{e^{d_{E}(\mathcal{A}_{i}, \mathcal{B}_{j})}}{\displaystyle\sum_{\substack{1\leq k \leq |\mathcal{M}|}} e^{d_{E}(\mathcal{A}_{i}, \mathcal{B}_{k})}} \\
    & = -\displaystyle\sum_{\substack{1\leq j \leq |\mathcal{M}|}} d(\mathcal{K}_{i}, \mathcal{K}_{j}) \log (\text{softmax}(d_{E}(\mathcal{A}_{i}, \mathcal{B}_{j})))
    \label{equ:ksgl-revisit}
\end{align}
\end{theorem}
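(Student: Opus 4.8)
The plan is to verify the stated identity between the two displayed forms of $KSGL(i)$ by unfolding the definition of the softmax operator; at its core the statement is the observation that the explicit normalized-exponential expression on the first line is literally the softmax. First I would recall that for any finite tuple of real scores the $j$-th coordinate of the softmax is $\text{softmax}(z_j) = e^{z_j}\big/\sum_{k} e^{z_k}$. Taking $z_j = d_{E}(\mathcal{A}_i, \mathcal{B}_j)$ for $1 \le j \le |\mathcal{M}|$, the fraction inside the logarithm on the first line of Equation~\ref{equ:ksgl-revisit} coincides, term by term, with $\text{softmax}(d_{E}(\mathcal{A}_i, \mathcal{B}_j))$, so replacing the fraction by this symbol under the sum produces the second line.

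Before performing the substitution I would confirm that every quantity in sight is well-defined. Since $d_{E}$ is valued in $\mathbb{R}^{+}$, each term $e^{d_{E}(\mathcal{A}_i, \mathcal{B}_k)}$ is strictly positive, whence the normalizing denominator $\sum_{k=1}^{|\mathcal{M}|} e^{d_{E}(\mathcal{A}_i, \mathcal{B}_k)}$ is a strictly positive, finite real number; consequently each softmax coordinate lies in $(0,1)$ and its logarithm is a well-defined finite real number. The calibration weights $d(\mathcal{K}_i, \mathcal{K}_j)$ are finite and, by the hypothesis $\sum_{j=1}^{|\mathcal{M}|} d(\mathcal{K}_i, \mathcal{K}_j) = 1$, constitute a probability distribution over $j$, so $KSGL(i)$ is a finite convex combination of well-defined logarithmic terms and hence itself well-defined.

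The final step is purely notational: because the outer summation runs over the identical index set $j$ in both lines, and the replacement $e^{d_{E}(\mathcal{A}_i,\mathcal{B}_j)}\big/\sum_{k} e^{d_{E}(\mathcal{A}_i,\mathcal{B}_k)} = \text{softmax}(d_{E}(\mathcal{A}_i,\mathcal{B}_j))$ holds as an identity for each individual $j$, the two weighted sums are equal summand by summand, with no need to invoke the normalization constraint or the monotonicity of $d$. I do not anticipate any genuine analytic obstacle: the one point deserving attention is that the domain condition $d_{E} \to \mathbb{R}^{+}$ is precisely what guarantees the argument of each logarithm is strictly positive, and I would make that dependence explicit; everything else is a direct substitution of the definition of softmax. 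As context it is worth remarking that $KSGL(i)$ is thereby exposed as the cross-entropy between the fixed calibration distribution $d(\mathcal{K}_i,\cdot)$ and the softmax distribution induced by $d_{E}(\mathcal{A}_i,\cdot)$, though this interpretation is not needed for the equivalence itself.
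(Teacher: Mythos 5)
Your proposal verifies only the notational rewriting---that $e^{d_{E}(\mathcal{A}_i,\mathcal{B}_j)}\big/\sum_{k} e^{d_{E}(\mathcal{A}_i,\mathcal{B}_k)}$ is, coordinate by coordinate, the definition of $\text{softmax}(d_{E}(\mathcal{A}_i,\mathcal{B}_j))$---plus some well-definedness bookkeeping. That observation is correct but it is not what the paper's proof of this theorem establishes. The substantive claim attached to this statement, announced in the main text as the ``ideal optimum'' relation $d(\mathcal{K}_{i},\mathcal{K}_{j}) = \text{softmax}(d_{E}(\mathcal{A}_{i},\mathcal{B}_{j}))$ and proved in the appendix, is a characterization of the minimizers of $KSGL(i)$: it says the loss drives the learned cross-modal similarities to reproduce the calibration distribution, and to order themselves consistently with the knowledge span. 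Nothing in your proposal touches this: you never differentiate the loss, never argue optimality of any configuration, and never use the hypotheses the theorem lists (the normalization $\sum_{j} d(\mathcal{K}_i,\mathcal{K}_j)=1$ and monotonicity of the calibration function), which should itself have been a warning sign that the intended content was being bypassed.

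Concretely, the paper treats each score $s_j = d_{E}(\mathcal{A}_{i},\mathcal{B}_{j})$ as a free variable, splits the sum over $j$ into the term whose numerator contains $e^{s_j}$ and the terms where $e^{s_j}$ appears only in the denominator, and computes
\[
\frac{\partial KSGL(i)}{\partial s_j} \;=\; -\Bigl( d(\mathcal{K}_{i},\mathcal{K}_{j}) - \text{softmax}(s_j) \Bigr),
\]
where the collapse to this simple form uses \emph{precisely} the constraint $\sum_{l} d(\mathcal{K}_{i},\mathcal{K}_{l})=1$---in direct tension with your closing remark that the normalization constraint need not be invoked. Setting the gradient to zero gives the optimum relation; the second derivative $\text{softmax}(s_j)\bigl(1-\text{softmax}(s_j)\bigr)$ is strictly positive since the softmax lies in $(0,1)$, so the stationary point is a minimum; and subtracting the resulting log-form identities for two indices $j, j'$ with $d(\mathcal{K}_{i},\mathcal{K}_{j}) > d(\mathcal{K}_{i},\mathcal{K}_{j'})$ yields $s_j - s_{j'} = \log\bigl( d(\mathcal{K}_{i},\mathcal{K}_{j}) / d(\mathcal{K}_{i},\mathcal{K}_{j'}) \bigr) > 0$, i.e., at the optimum the embedding similarities respect the ordering induced by the knowledge span. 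Read purely as a check of the two displayed lines your argument is fine as far as it goes, but as a proof of what this theorem is meant to assert (and what the paper proves) it has a genuine gap: the entire optimization analysis---first-order condition, coordinatewise convexity, and order consistency---is missing.
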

\begin{proof}
In order to optimize the loss $KSGL(i)$, we need to set the following partial derivative to be 0 for each $d_{E}(\mathcal{A}_{i}, \mathcal{B}_{j})$ with $1\leq j \leq |\mathcal{M}|$. Here are the detail process: 
\begin{align*}
\frac{\partial KSGL(i)}{\partial {d_{E}(\mathcal{A}_{i}, \mathcal{B}_{j})}} 
&= \frac{\partial}{\partial d_{E}(\mathcal{A}_{i}, \mathcal{B}_{j})}\underbrace{\left( - d(\mathcal{K}_{i}, \mathcal{K}_{j}) \log \frac{e^{d_{E}(\mathcal{A}_{i}, \mathcal{B}_{j})}}{e^{d_{E}(\mathcal{A}_{i}, \mathcal{B}_{j})} + \sum_{\substack{k \neq j}} e^{d_{E}(\mathcal{A}_{i}, \mathcal{B}_{k})}} \right)}_{\text{When the numerator includes } e^{d_{E}(\mathcal{A}_{i}, \mathcal{B}_{j})}} \\
&\quad + \frac{\partial}{\partial d_{E}(\mathcal{A}_{i}, \mathcal{B}_{j})}\underbrace{\left( \sum_{\substack{k \neq j}} - d(\mathcal{K}_{i}, \mathcal{K}_{k}) \log \frac{e^{d_{E}(\mathcal{A}_{i}, \mathcal{B}_{k})}}{e^{d_{E}(\mathcal{A}_{i}, \mathcal{B}_{j})} + \sum_{\substack{k \neq j}} e^{d_{E}(\mathcal{A}_{i}, \mathcal{B}_{k})}} \right)}_{\text{When the numerator does not include }e^{d_{E}(\mathcal{A}_{i}, \mathcal{B}_{j})}} \\
&= -(d(\mathcal{K}_{i}, \mathcal{K}_{j}) - d(\mathcal{K}_{i}, \mathcal{K}_{j}) \cdot \text{softmax}(d_{E}(\mathcal{A}_{i}, \mathcal{B}_{j})) \\
&\quad - \sum_{\substack{k \neq j}} d(\mathcal{K}_{i}, \mathcal{K}_{k}) \cdot \text{softmax}(d_{E}(\mathcal{A}_{i}, \mathcal{B}_{j})) \\
&= - \left( d(\mathcal{K}_{i}, \mathcal{K}_{j}) - (d(\mathcal{K}_{i}, \mathcal{K}_{j}) + \sum_{\substack{k \neq j}} d(\mathcal{K}_{i}, \mathcal{K}_{k})) \cdot \text{softmax}(d_{E}(\mathcal{A}_{i}, \mathcal{B}_{j})) \right)
\end{align*}
Since $\sum_{l=1}^{|\mathcal{M}|}d(\mathcal{K}_i, \mathcal{K}_l)$ = 1, we can further simplify it as 
\begin{align*}
\frac{\partial KSGL(i)}{\partial {d_{E}(\mathcal{A}_{i}, \mathcal{B}_{j})}} = - (d(\mathcal{K}_{i}, \mathcal{K}_{j}) -  \text{softmax}(d_{E}(\mathcal{A}_{i}, \mathcal{B}_{j}))
\end{align*}
In order to optimize, we need to set the respective partial derivative to be 0:
\begin{align*}
\frac{\partial KSGL(i)}{\partial {d_{E}(\mathcal{A}_{i}, \mathcal{B}_{j})}} = - (d(\mathcal{K}_{i}, \mathcal{K}_{j}) - \text{softmax}(d_{E}(\mathcal{A}_{i}, \mathcal{B}_{j})) = 0
\end{align*}
In addition, the corresponding second partial derivative denoted as $\frac{\partial KSGL(i)}{\partial {d_{E}^2(\mathcal{A}_{i}, \mathcal{B}_{j})}}$ manifests as follows:
\begin{align*}
\frac{\partial KSGL(i)}{\partial {d_{E}^2(\mathcal{A}_{i}, \mathcal{B}_{j})}} = \text{softmax}(d_{E}(\mathcal{A}_{i}, \mathcal{B}_{j}))(1- \text{softmax}(d_{E}(\mathcal{A}_{i}, \mathcal{B}_{j})))
\end{align*}
As $\text{softmax}(d_{E}(\mathcal{A}_{i}, \mathcal{B}_{j}))$ takes values within the open interval (0,1), it follows that $\frac{\partial KSGL(i)}{\partial {d_{E}^2(\mathcal{A}_{i}, \mathcal{B}_{j})}}$ is always positive. Consequently, the pinnacle of optimization emerges as a global minimum.\\
Furthermore, when it comes to optimum:
\begin{align*}
d(\mathcal{K}_{i}, \mathcal{K}_{j}) &= \text{softmax}(d_{E}(\mathcal{A}_{i}, \mathcal{B}_{j})) \\
d_{E}(\mathcal{A}_{i}, \mathcal{B}_{j}) &= \log(d(\mathcal{K}_{i}, \mathcal{K}_{j})) + \log \left( \sum_{\substack{1\leq l \leq |\mathcal{M}|}} e^{d_{E}(\mathcal{A}_{i}, \mathcal{B}_{l})} \right)
\end{align*}
It is easy to show that when it reaches optimum, $d_{E}(A_{i}, B_{j})$ is consistent with Knowledge Span Guidance $d(\mathcal{K}_{i}, \mathcal{K}_{j})$. Without loss of generosity, suppose $d(\mathcal{K}_{i}, \mathcal{K}_{j}) > d(\mathcal{K}_{i}, \mathcal{K}_{j'})$ :
\begin{align*}
d_{E}(\mathcal{A}_{i}, \mathcal{B}_{j}) - d_{E}(\mathcal{A}_{i}, \mathcal{B}_{j'}) &= \log(d(\mathcal{K}_{i}, \mathcal{K}_{j})) + \log \left( \sum_{\substack{1\leq l \leq |\mathcal{M}|}} e^{d_{E}(\mathcal{A}_{i}, \mathcal{B}_{l})} \right) \\
&\quad - \left( \log(d(\mathcal{K}_{i}, \mathcal{K}_{j'})) + \log \left( \sum_{\substack{1\leq l \leq |\mathcal{M}|}} e^{d_{E}(\mathcal{A}_{i}, \mathcal{B}_{l})} \right) \right) \\
&= \log(d(\mathcal{K}_{i}, \mathcal{K}_{j})) - \log(d(\mathcal{K}_{i}, \mathcal{K}_{j'})) \\
&= \log\left( \frac{d(\mathcal{K}_{i}, \mathcal{K}_{j})}{d(\mathcal{K}_{i}, \mathcal{K}_{j'})} \right) > 0
\end{align*}
\end{proof}




\section{Experimental Setting}
\label{appendix:exp-setting}

\subsection{Pre-training}
\textbf{Dataset:}
 We use $^{13}$C NMR spectra of about 20,000 molecules sourced from nmrshiftdb2 \cite{steinbeck2003nmrshiftdb}), a public access database that contains NMR spectra of organic molecules. In the collected dataset, molecule are aligned with their respective $^{13}$C NMR spectra, and atomic alignments with peaks are also included. Notably, the dataset contains 12,771 molecules with fewer than 10 carbon atoms, 7,043 molecules featuring carbon atom counts ranging from 10 to 20, and 1,138 molecules incorporating more than 20 carbon atoms. The quality of the dataset was further validated by experienced organic chemists. We randomly sample 80\% of the molecules for training and the rest for evaluation.

\textbf{Training:}
  We concurrently leverage both graph- and node-level alignment tasks in the pre-training of K-M\textsuperscript{3}AID.  Graph-level alignment focuses on aligning molecules with their spectra, accompanied by the utilization of cross-entropy loss for contrastive purposes. On the other hand, node-level alignment entails aligning atoms with their corresponding peaks, implemented through knowledge-guided instance-wise discrimination to achieve contrastive loss. A diverse set of molecular features is employed for training, including atomic number (node feature), chiral tags (node feature), hybridization (node feature), bond types (edge feature), and bond direction (edge feature). The spectral features are derived from  peak intensity, peak position (chemical shift measured in ppm), and peak type.

\subsection{Zero-Shot  Molecular Retrieval}

\textbf{Dataset:}
 We randomly collected 1 million molecules from PubChem \cite{kim2023pubchem} to form a molecular reference library. Subsequently, we carefully selected 1000 spectra, ensuring that they had not appeared in the training dataset, from an external dataset to serve as query spectra. Following this, the corresponding molecules associated with these 1000 spectra were added into the existing reference library.

\textbf{Evaluation:}
 We perform molecular retrieval using each of the selected spectra to determine if the correct corresponding molecular entity can be retrieved from the reference library. The model's performance is assessed at top-1 accuracy, as well as at accuracy of top 5, top 10, and top 25.

\subsection{Zero-Shot Isomer Recognition}
\textbf{Dataset:}
We categorize isomers from the validation dataset to guarantee their absence from the training dataset. To assess effectiveness, we perform isomer recognition on each isomer group containing at least 10 molecules. Within the same group, isomers may be structural or spatial isomers of each other. Structural isomers refer to molecules with the same molecular formula but different structural arrangements of atoms, resulting in distinct chemical structures. On the other hand, spatial isomers, also known as stereoisomers, have the same molecular formula and arrangement of atoms but differ in the spatial orientation of their atoms in three-dimensional space, leading to different stereoisomeric forms. (For an elucidation of an isomer group and in-depth insights into isomers with NMR, please refer to the details provided in Appendix~\ref{appendix:more-isomer-discussion}.)

\textbf{Evaluation:}
 We conduct isomer recognition for each isomer group, aiming to assess the correct alignment of each spectrum with its respective molecule within each isomer group.

\subsection{Zero-Shot Peak Assignment}
\textbf{Dataset:} 
 The dataset utilized for evaluating the overall performance of K-M\textsuperscript{3}AID on zero-shot peak assignment is the validation dataset from the pre-training phase. In order to highlight the capabilities of K-M\textsuperscript{3}AID in zero-shot peak assignment, the case studies include complex natural products featuring multiple fused rings, stereogenic (chiral) centers, and symmetric structures.

\textbf{Evaluation:}
 We conduct peak assignment within each molecule, aiming to assess the accurate alignment of each atom with its corresponding peak on the spectrum. It's important to note that this alignment process is confined to each individual molecule and not across different molecules.

\section{Further ablation study about parameter choices}

\subsection{Ablation study about the choice of GIN structure and projection.}
\label{appendix:gin-projection-parameter}
 We choose GIN\cite{xu2018powerful} as our graph encoder. By Table~\ref{tab:gin-structure-projection-ablation}, "GIN Depth" signifies the number of layers in the GIN, "GIN Embedding Dim" denotes the dimensionality of the embeddings generated by the GIN model, and "Projection Dim" indicates the resulting dimensionality after transforming the GIN-produced embeddings. In particular, the best performance is observed when the GIN model has 5 layers, GIN Embedding Dim is 128, and projection Dim is 512. 
\begin{table}[!ht]
\caption{GIN structure and projection ablation study}
\label{tab:gin-structure-projection-ablation}
\small
\centering
    \begin{tabular}{cccc}
    \hline
    \bf GIN Depth  & \bf GIN Embedding Dim  & \bf Projection Dim & \bf Validation accuracy (\%) \\ 
    \hline
    3 & 128 & 128 & 86.6 \\
    3 & 256 & 128 & 86.8 \\
    3 & 512 & 128 & 86.3 \\
    5 & 128 & 128 & 89.4 \\
    5 & 256 & 128 & 89.6 \\
    5 & 512 & 128 & 89.3 \\
    3 & 128 & 256 & 86.6 \\
    3 & 256 & 256 & 86.8 \\
    3 & 512 & 256 & 86.3 \\
    5 & 128 & 256 & 89.4 \\
    5 & 256 & 256 & 89.6 \\
    5 & 512 & 256 & 89.3 \\
    3 & 128 & 512 & 86.6 \\
    3 & 256 & 512 & 86.5 \\
    3 & 512 & 512 & 86.2 \\
    5 & 32 & 512 & 84.0 \\
    5 & 64 & 512 & 87.5 \\
    5 & 128 & 512 & \textbf{90.0} \\
    5 & 256 & 512 & 89.4 \\
    5 & 512 & 512 & 88.9 \\
    \hline
    \end{tabular}
\end{table}

\subsection{Ablation Study on the Choice of \texorpdfstring{$\tau_1$ and $\tau_2$}{tau1 and tau2}}

\label{appendix:tau-ablation-study}
We also conducted a further ablation study exploring different combinations of $\tau_1$ and $\tau_2$ as shown in Table~\ref{tab:temp-sensitivity}. For this analysis, we fixed the GIN depth at 5, set the GIN embedding dimensionality to 128, and maintained a projection dimension of 512.  We observe that the best performance is achieved when $\tau_1$ = $10^{-5}$ and $\tau_2$ = $10^1$.

\begin{table}[ht]
\caption{Ablation study about $tau_1$ and $tau_2$. We have 5 layers and 128 dimension as the final representation.} 
\label{tab:temp-sensitivity}
\centering
\small
\begin{tabular}{cccc}
\hline
\bf $\tau_1$  & \bf $\tau_2$  & \bf Molecular Alignment Accuracy (\%) & \bf Atom Alignment Accuracy (\%) \\ 
\hline
$10^{-1}$ & $10^{1}$ &  94.9 & 89.6 \\
$10^{-1}$ & $10^{2}$ &  95.2 & 89.8 \\
$10^{-1}$ & $10^{3}$ &  95.6 & 89.6\\
$10^{-1}$ & $10^{4}$ &  95.1 & 88.9\\
$10^{-1}$ & $10^{5}$ &  95.0 & 89.3\\
$10^{-2}$ & $10^{1}$ &  95.5 &89.8 \\
$10^{-2}$ & $10^{2}$ &  94.8 & 89.8\\
$10^{-2}$ & $10^{3}$ &  95.4 & 88.8\\
$10^{-2}$ & $10^{4}$ &  94.8 &87.2\\
$10^{-2}$ & $10^{5}$ &  95.1 &89.4\\
$10^{-3}$ & $10^{1}$ &  95.0 &89.2\\
$10^{-3}$ & $10^{2}$ &  95.1 &89.1\\
$10^{-3}$ & $10^{3}$ &  95.2 &89.0\\
$10^{-3}$ & $10^{4}$ &  95.3 &89.7\\
$10^{-3}$ & $10^{5}$ &  95.0 &89.4\\
$10^{-4}$ & $10^{1}$ &  95.0 &89.8\\
$10^{-4}$ & $10^{2}$ &  95.1 &89.7\\
$10^{-4}$ & $10^{3}$ &  95.0 &89.8\\
$10^{-4}$ & $10^{4}$ &  95.3 &89.5\\
$10^{-4}$ & $10^{5}$ &  95.1 &88.4\\
$10^{-5}$ & $10^{1}$ &  \textbf{95.4} & \textbf{90.0}\\
$10^{-5}$ & $10^{2}$ &  95.0 &89.5\\
$10^{-5}$ & $10^{3}$ &  95.8 & 89.6\\
$10^{-5}$ & $10^{4}$ &  95.2 & 89.7\\
$10^{-5}$ & $10^{5}$ &  95.0 & 89.7\\
\hline
\end{tabular}
\end{table}

\section{Additional Results on Molecular Retrieval}

\begin{table}[!ht]
    \centering
    \small
    \caption{Zero-shot molecular retrieval top 5, 10, 25 accuracy (\%) with K-M\textsuperscript{3}AID and baselines}
        \begin{tabular}{c|c|c|c|c|c|c}
        \hline
         
        \multicolumn{1}{c|}{Method}     &  \multicolumn{1}{l|}{Accuracy (\%)} 
        & \multicolumn{1}{c|}{$10^2$} & \multicolumn{1}{c|}{$10^3$} & \multicolumn{1}{c|}{$10^4$} & \multicolumn{1}{c|}{$10^5$} & \multicolumn{1}{c}{$10^6$} \\ \hline
        \multirow{4}{*}{K-M\textsuperscript{3}AID} & Top 1 & 95.8$\pm$1.0 & 80.4$\pm$3.9 & 46.3$\pm$1.2 & 18.0$\pm$0.8 & 5.8$\pm$1.7 \\
        &\multirow{1}{*}{Top 5} & 99.8$\pm$0.2 & 96.8$\pm$0.5 & 77.8$\pm$1.1 & 41.6$\pm$1.6 & 16.6$\pm$2.3 \\
        &\multirow{1}{*}{Top 10} & 100.0$\pm$0.0 & 98.8$\pm$0.3 & 87.7$\pm$1.0 & 53.9$\pm$1.8 & 25.2$\pm$3.2 \\
        &\multirow{1}{*}{Top 25} & 100.0$\pm$0.0 & 99.6$\pm$0.2 & 94.8$\pm$0.6 & 71.6$\pm$0.6 & 37.8$\pm$4.1 \\
                                         \hline
        \multirow{4}{*}{SP}     & Top 1                 
        & 95.3$\pm$0.8                    & 78.6$\pm$2.7                      & 35.8$\pm$3.8                       & 12.9$\pm$1.6                        & 3.4$\pm$0.9                          \\
                                         & Top 5                
                                         & 95.4$\pm$0.1                     & 77.3$\pm$0.7                     & 44.7$\pm$2.3                       & 16.2$\pm$2.4                        & 4.4$\pm$1.5                          \\
                                         & Top 10                
                                         & 100.0$\pm$0.0                   & 97.3$\pm$0.7                      & 77.5$\pm$2.3                       & 40.2$\pm$2.4                        & 12.3$\pm$1.5                         \\
                                         & Top 25                
                                         & 100.0$\pm$0.0                    & 99.1$\pm$0.2                      & 85.9$\pm$1.0                       & 53.1$\pm$3.0                        & 18.5$\pm$1.8                         \\
                                         \hline
                \multirow{4}{*}{WP(th=1)} & Top 1                 
        & 92.9$\pm$0.6                     & 71.7$\pm$1.0                      & 32.7$\pm$1.3                       & 10.7$\pm$0.5                        & 3.6$\pm$0.7                          \\
                                         & Top 5                 
                                         & 99.6$\pm$0.1                    & 93.8$\pm$0.8                      & 63.9$\pm$1.5                       & 29.3$\pm$1.5                        & 10.2$\pm$1.2                         \\
                                         & Top 10                
                                         & 99.9$\pm$0.0                    & 97.1$\pm$0.4                      & 76.8$\pm$0.7                       & 39.3$\pm$0.9                        & 15.7$\pm$1.5                         \\
                                         & Top 25                
                                         & 100.0$\pm$0.0                    & 99.1$\pm$0.2                      & 88.2$\pm$0.6                       & 55.7$\pm$1.1                        & 26.5$\pm$2.0\\
                                         \hline
                 \multirow{4}{*}{No comm} & Top 1                 
        & 94.8$\pm$1.2                     & 77.6$\pm$1.4                      & 40.1$\pm$1.2                       & 14.4$\pm$0.9                        & 4.1$\pm$1.1                          \\
                                         & Top 5                
                                         & 99.8$\pm$0.1                    & 96.2$\pm$0.5                      & 73.6$\pm$2.2                       & 35.8$\pm$1.3                        & 11.4$\pm$1.0                         \\
                                         & Top 10                
                                         & 99.9$\pm$0.1                    & 98.6$\pm$0.3                      & 84.1$\pm$1.4                       & 47.3$\pm$1.8                        & 17.3$\pm$1.8                         \\
                                         & Top 25                
                                         & 100.0$\pm$0.0                    & 99.7$\pm$0.2                      & 92.9$\pm$0.9                       & 65.1$\pm$2.2                        & 27.9$\pm$2.5\\
                                         \hline
        \end{tabular}
        \label{tab:pubmed-retrievel-th-whole}
\end{table}

\begin{table}[!ht]
    \centering
    \small
    \caption{Comparing sampled molecules to their Top 1 neighbors using K-M\textsuperscript{3}AID, SP, WP (th = 1), and K-M\textsuperscript{3}AID without communication across datasets of varying sizes and employing different similarity metrics (\%) including Cosine \cite{cosine_similarity}), Dice \cite{dice_coefficient}), Russel \cite{russel_similarity}, Sokal \cite{sokal_similarity} and Tanimoto \cite{tanimoto_similarity}.}
        \begin{tabular}{c|c|c|c|c|c|c}
        \hline
         
        \multicolumn{1}{c|}{Methods}     &  \multicolumn{1}{l|}{Similarity Metric} 
        & \multicolumn{1}{c|}{$10^2$} & \multicolumn{1}{c|}{$10^3$} & \multicolumn{1}{c|}{$10^4$} & \multicolumn{1}{c|}{$10^5$} & \multicolumn{1}{c}{$10^6$} \\ \hline
        \multirow{4}{*}{K-M\textsuperscript{3}AID} & 
        \multirow{1}{*}{Cosine} 
         & 96.1$\pm$0.9 & 81.9$\pm$3.5 & 50.0$\pm$1.2 & 23.9$\pm$0.7 & 13.5$\pm$1.5 \\
        & \multirow{1}{*}{Dice}                   
        & 96.1$\pm$0.9                    & 81.9$\pm$3.5                      & 50.0$\pm$1.2                       & 23.7$\pm$0.6                        & 13.1$\pm$1.5                          \\
                &\multirow{1}{*}{Russel} &  95.8$\pm$0.1                     & 80.4$\pm$3.9                      & 46.4$\pm$1.1                       & 18.1$\pm$0.8                        & 5.9$\pm$1.7                          \\
                 &\multirow{1}{*}{Sokal} 
        & 95.9$\pm$0.9                     & 80.8$\pm$3.8                      & 47.2$\pm$1.2                       & 19.5$\pm$0.7                        & 7.7$\pm$1.7                          \\
                &\multirow{1}{*}{Tanimoto} 
        & 96.0$\pm$0.9                     & 81.2$\pm$3.7                      & 48.2$\pm$1.2                       & 21.0$\pm$0.7                        & 9.6$\pm$1.6                          \\
                                         \hline
        \multirow{4}{*}{SP}     & \multirow{1}{*}{Cosine} 
         & 95.4$\pm$0.8 & 78.6$\pm$2.4 & 42.5$\pm$3.6 & 20.9$\pm$1.5 & 12.2$\pm$0.7 \\
        & \multirow{1}{*}{Dice}                   
        & 95.5$\pm$0.8                    & 78.6$\pm$2.5                      & 42.4$\pm$3.6                       & 20.3$\pm$1.5                        & 11.8$\pm$0.8                          \\
                &\multirow{1}{*}{Russel} &  95.1$\pm$0.8                    & 76.9$\pm$2.7                      & 38.3$\pm$3.8                       & 14.4$\pm$1.6                        & 4.3$\pm$0.9                          \\
                 &\multirow{1}{*}{Sokal} 
        & 95.2$\pm$0.8                    & 77.4$\pm$2.6                     & 39.4$\pm$3.8                       & 15.9$\pm$1.6                        & 6.3$\pm$0.8                          \\
                &\multirow{1}{*}{Tanimoto} 
        & 95.3$\pm$0.8                    & 77.8$\pm$2.6                      & 40.4$\pm$3.7                       & 17.4$\pm$1.6                        & 8.2$\pm$0.8                          \\
                                         \hline
                \multirow{4}{*}{WP(th=1)} & \multirow{1}{*}{Cosine} 
         & 93.4$\pm$0.6 & 73.9$\pm$0.1  & 37.5$\pm$1.2 & 17.2$\pm$0.4 & 11.4$\pm$0.5\\
        & \multirow{1}{*}{Dice}                   
        & 93.3$\pm$0.6                    & 73.7$\pm$0.9                      & 37.3$\pm$1.2                       & 16.9$\pm$0.4                        & 11.1$\pm$0.5                          \\
                &\multirow{1}{*}{Russel} &  92.9$\pm$0.6                     & 71.7$\pm$1.0                      & 32.7$\pm$1.3                       & 10.8$\pm$0.5                        & 3.7$\pm$0.6                          \\
                 &\multirow{1}{*}{Sokal} 
        & 93.0$\pm$0.6                     & 72.3$\pm$3.4                      & 33.9$\pm$1.2                       & 12.4$\pm$0.5                        & 5.6$\pm$0.6                          \\
                &\multirow{1}{*}{Tanimoto} 
        & 93.1$\pm$0.6                     & 72.8$\pm$1.0                      & 35.1$\pm$1.2                       & 14.0$\pm$0.5                        & 7.5$\pm$0.6                          \\
                                         \hline
                 \multirow{4}{*}{No communication} & \multirow{1}{*}{Cosine} 
         & 95.2$\pm$1.1 & 79.2$\pm$1.3 & 44.2$\pm$1.2 & 20.6$\pm$0.7 & 12.2$\pm$0.9 \\
        & \multirow{1}{*}{Dice}                   
        & 95.1$\pm$1.1                    & 79.1$\pm$1.3                      & 44.0$\pm$1.2                       & 20.4$\pm$0.7                        & 11.8$\pm$0.9                          \\
                &\multirow{1}{*}{Russel} &  94.8$\pm$1.2                     & 77.6$\pm$1.4                      & 40.2$\pm$1.2                       & 14.5$\pm$0.9                        & 4.2$\pm$1.1                          \\
                 &\multirow{1}{*}{Sokal} 
        & 94.9$\pm$1.1                     & 78.0$\pm$1.4                      & 41.2$\pm$1.2                       & 16.0$\pm$0.8                        & 6.2$\pm$1.1                          \\
                &\multirow{1}{*}{Tanimoto} 
        & 95.0$\pm$1.1                     & 78.4$\pm$1.3                      & 42.2$\pm$1.2                       & 17.6$\pm$0.8                        & 8.2$\pm$1.0                          \\
                                         \hline
        \end{tabular}
        \label{tab:pubmed-retrievel-th-whole-similarity}
\end{table}

\section{Additional Discussion about Isomers}
\label{appendix:more-isomer-discussion}
\subsection{Isomer Category}
\label{appendix:isomer-category}
Isomers typically fall into two main categories: constitutional (structural) isomers, which share the same chemical formula but display distinct atom connectivity, and stereoisomers (spatial isomers), which share the same topology graph but diverge in their three-dimensional arrangement (see Figure~\ref{fig:isomer-division}). Constitutional isomers are NMR-variant, meaning that different isomers produce distinct NMR spectrum. In the sub-categories of stereoisomers, enantiomers are NMR-invariant, but diastereomers and cis-trans isomers are NMR-variant.

\begin{figure}[!ht]
    \centering
    \includegraphics[width=0.6\textwidth]{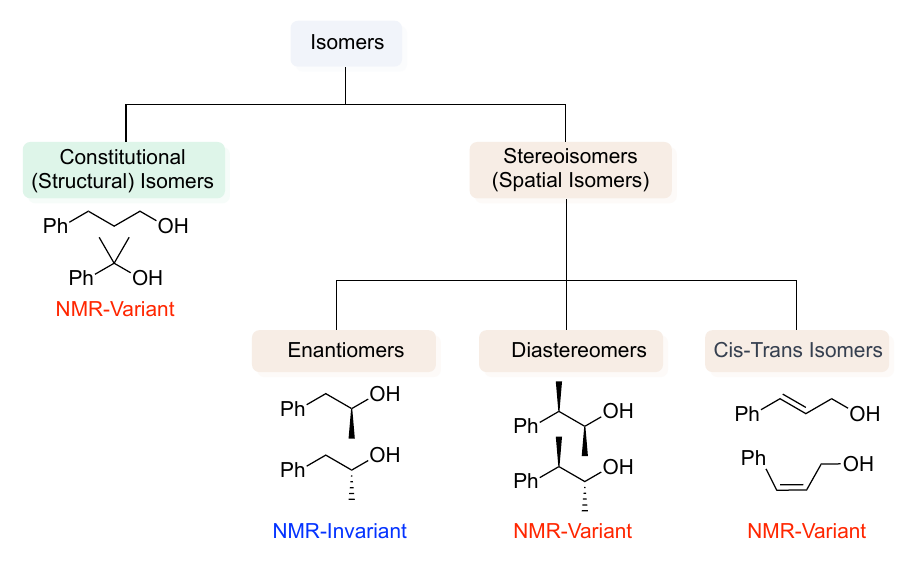}
    \caption{NMR Variability in Isomers}
    \label{fig:isomer-division}
\end{figure}

\subsection{Isomers Group for \texorpdfstring{$\text{C}_{7}\text{H}_{11}\text{NO}_{3}$}{C7H11NO3}}
\label{appendix:isomer-example}
Here is an example for isomer groups. In this isomer group of $\text{C}_{7}\text{H}_{11}\text{NO}_{3}$, they all share the same chemical formula in Figure~\ref{fig:isomer-demo}. The first 10 are constitutional (structural) isomers of each other (cycled green), the last 4 are two pairs of diastereomers (cycled brown). Each of these isomers corresponds to a distinct NMR spectrum.

\begin{figure}[!ht]
    \centering
    \includegraphics[width=0.6\textwidth]{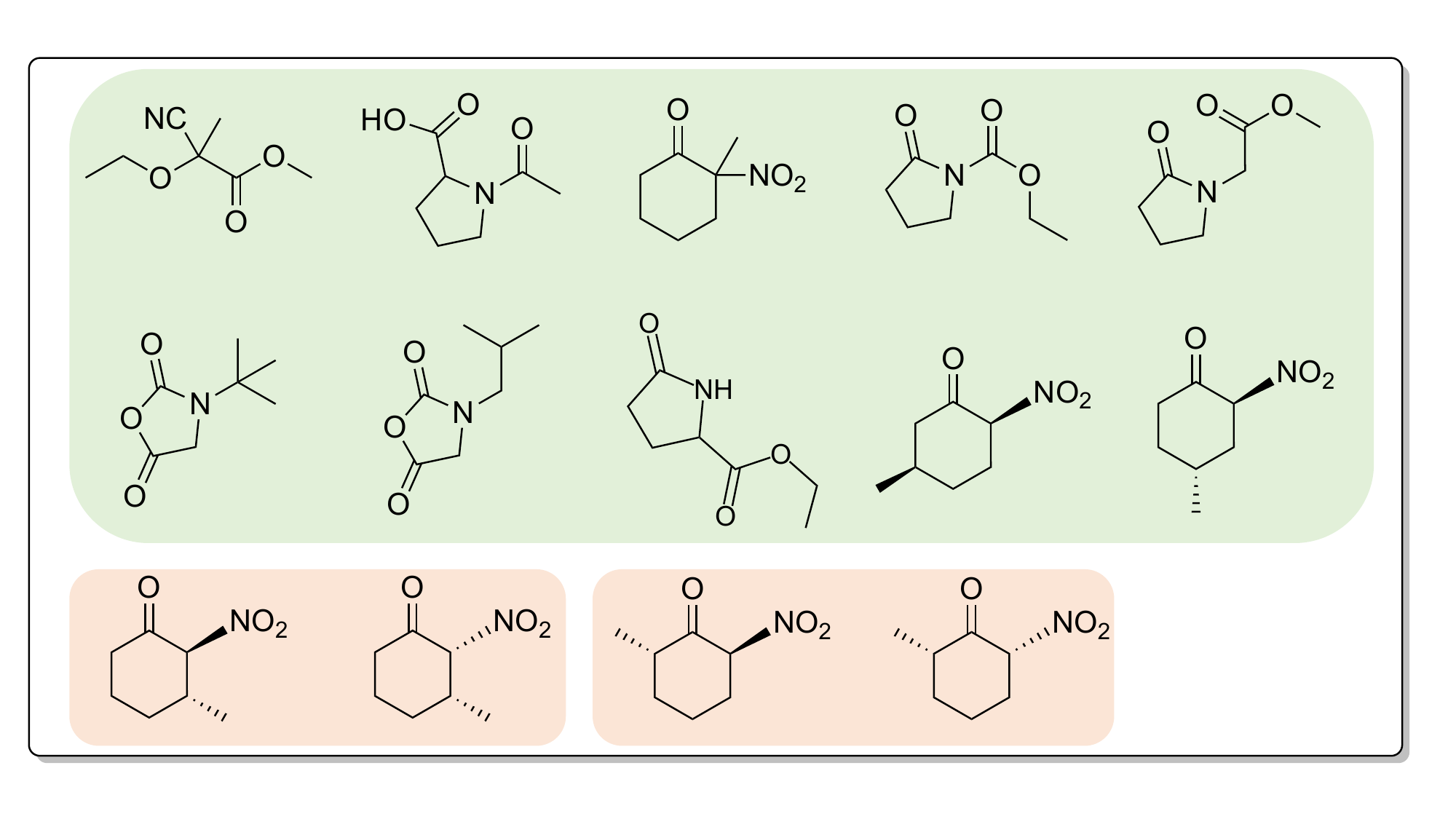}
    \caption{Isomer demo for $\text{C}_{7}\text{H}_{11}\text{NO}_{3}$}
    \label{fig:isomer-demo}
\end{figure}

\section{Additional Results on Peak Assignment}

In complex natural product molecules, it is a common situation that the local contents of some atoms within the same molecule exhibit a high degree of similarity. It gives rise to challenges for the atomic alignment, as some atoms correspond to ppm values in close proximity. However, our K-M\textsuperscript{3}AID model is capable of recognizing each of the atoms with effective learnt embeddings and deciphering the correspondences among the atoms and the peaks at zero-shot. Two complex natural product molecules with multiple rings (4 and 4, respectively) and multiple chiral centers (6 and 8, respectively) are taken to showcase the effectiveness of atomic alignment (see Appendix Figure~\ref{fig:atom-alignment-specific}).

\begin{figure}[!ht]
    \centering
    \includegraphics[width=1\textwidth]{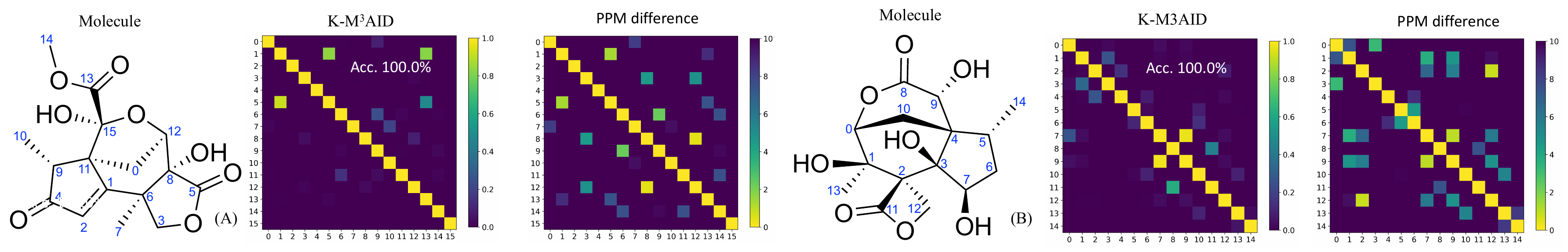}
    \caption{Examples of Zero-shot Atomic Alignment for Complex Natural Products. Yellow cells in the PPM difference represent the ground truth alignment.}
    \label{fig:atom-alignment-specific}
    \vspace{0pt} 
\end{figure}

In  molecular A in Figure ~\ref{fig:molecule-compare-appendix}, atom 13 and atom 14 are tertiary carbons (attaching to 3 carbons and 1 hydrogen) and on the same 5-member ring, corresponding to the ppm of 34.3 and 35.6, respectively. The similar local content of these two atoms fools SP and WP. In addition, WP fails with more atomic alignments. The molecular B is chemical symmetric regarding atom 0. Thus, atom 1 and atom 3 correspond to the same peak on the spectra. The ppm of atom 1 and atom 3 is 114.2, the ppm of atom 2 and atom 4 is 110.0. While there is 4.2 difference, SP and WP fails to pick up right alignment for atom 1 and atom 3. In contrast, K-ID succeed to align the atoms with peaks in both molecules.

\begin{figure}[h]
    \centering
    \includegraphics[width=1\textwidth]{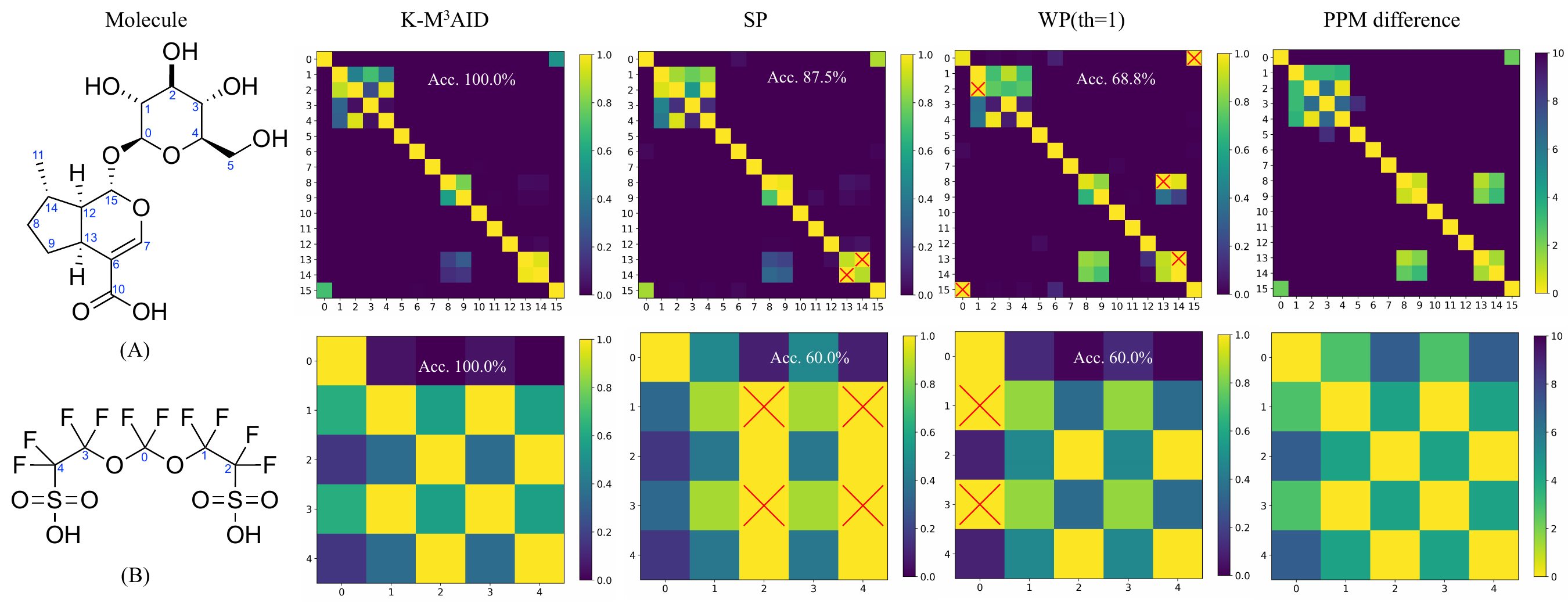}
    \caption{Extra case studies of IE-Meta-MMA.Yellow cells in the PPM differerence represent the ground truth alignment, and red cross represents the wrong alignment.}
    \label{fig:molecule-compare-appendix}
\end{figure}

\end{document}